\pgfplotsset{width=10cm,compat=1.9}
\newtheorem{theorem}{Theorem}[section]
\newtheorem{corollary}{Corollary}[theorem]
\newtheorem{lemma}[theorem]{Lemma}
\numberwithin{intassumption}{assumption}
\newtheorem{definition}{Definition}[section]
\begin{document}

\twocolumn[

\aistatstitle{Preventing Manifold Intrusion with Locality: Local Mixup}

\aistatsauthor{ Raphaël Baena\And Lucas Drumetz \And Vincent Gripon  }

\aistatsaddress{IMT Atlantique, Lab-STICC, UMR CNRS 6285, F-29238, France} ]

\begin{abstract}

Mixup is a data-dependent regularization technique that consists in linearly interpolating input samples and associated outputs. It has been shown to improve accuracy when used to train on standard machine learning datasets. However, authors have pointed out that Mixup can produce out-of-distribution virtual samples and even contradictions in the augmented training set, potentially resulting in adversarial effects. In this paper, we introduce \emph{Local Mixup} in which distant input samples are weighted down when computing the loss. In constrained settings we demonstrate that \emph{Local Mixup} can create a trade-off between bias and variance, with the extreme cases reducing to vanilla training and classical Mixup. Using standardized computer vision benchmarks , we also show that \emph{Local Mixup} can improve test accuracy.

\end{abstract}

\section{Introduction}

Deep Learning has become the golden standard for many tasks in the fields of machine learning and signal processing. Using a large number of tunable parameters, Deep Neural Networks (DNNs) are able to identify subtle dependencies in large training datasets to be later leveraged to perform accurate predictions on previously unseen data. Without constraints or enough samples, many models can fit the training data (high variance) and it is difficult to find the ones that would generalize correctly (low bias).

Regularization techniques have been deployed with the aim of improving generalization~\citep{goodfellow2016deep}. In~\citep{guo2019mixup}, the authors categorize theses techniques into data-independent or data-dependent ones. For example some data-independent regularization techniques constrain the model by penalizing the norm of the parameters, for instance through weight decay~\citep{weightdecay}. A popular data-dependent regularization technique consists of artificially increasing the size of the training set, which is referred to as \emph{data augmentation}~\citep{dataaugmentation}. In the field of computer vision, for example, it is very common to generate new samples using basic class-invariant transformations~\citep{horizontalflip,horizontalflip3}. 

In~\citep{zhang2017mixup}, the authors introduce \emph{Mixup}, a data augmentation technique in which artificial training samples $(\mathbf{\tilde x}, \mathbf{\tilde y})$, called \emph{virtual} samples, are generated through linear interpolations between two training samples $(\mathbf{x}_i,\mathbf{y}_i)$ and $(\mathbf{x}_j, \mathbf{y}_j)$. The associated output is computed as the corresponding linear interpolation on the respective outputs. \emph{Mixup} improves generalization error of state-of-the-art models on ImageNet, CIFAR, speech, and tabular datasets~\citep{zhang2017mixup}. This method is also used in the context of few shot learning~\citep{fewshot1,fewshot2}.

By using linear interpolation, \emph{virtual} samples can in some cases contradict each other, or even generate out-of-distribution inputs. This phenomenon has been recently described in~\citep{guo2019mixup} where the authors use the term \emph{manifold intrusion}. As such, it is not clear if \emph{Mixup} is always desirable. More generally, the question arises of whether \emph{Mixup} could be constrained to reduce the risk of generating such spurious interpolations. In this paper we introduce \emph{Local Mixup}, where \emph{virtual} samples are weighted in the training loss. The weight of each possible \emph{virtual} sample depends on the distance between the endpoints of the corresponding segment $(\mathbf{x}_i,\mathbf{x}_j)$. In particular, this method can be implemented to forbid interpolations between samples that are too distant from each other in the input domain, reducing the risk of generating spurious virtual samples.

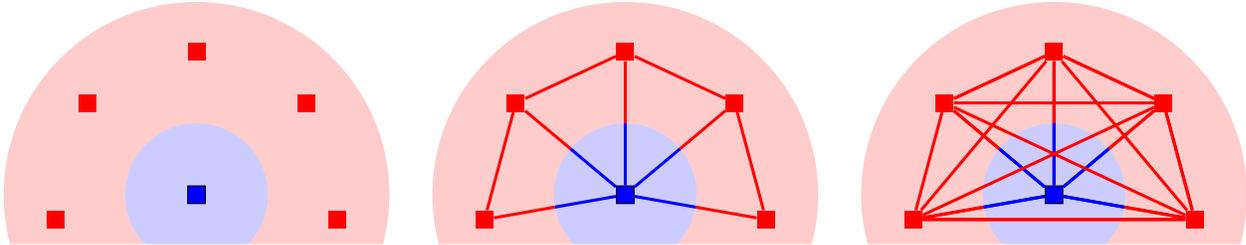
\begin{figure*}
    \centering
    
\begin{tikzpicture}[nodes=state, scale=0.95]

\tikzstyle{every node} = [minimum width=3pt];
\begin{scope}[rotate=90, yshift=12cm]
\def \number {5}
\def \radius {2cm}
\def \degree {250/\number}

\draw[draw=none,fill=blue,opacity=0.2,] (0,0) circle (1cm);
\draw[draw=none,fill=red,opacity=0.2,even odd rule] (0,0) circle (1cm) (0,0) circle (2.7cm);
\draw[fill=white,draw = white] (-3,-3) rectangle (-0.7, 3);

\foreach \s in {1,...,\number}
{
    \node[fill =red] at ({\degree * (\s-3 )}:\radius) (\s) {};
}

\node[draw,fill = blue] at (0,0) (B) {};

\end{scope}
\begin{scope}[rotate=90, yshift=6cm]
\def \number {5}
\def \radius {2cm}
\def \degree {250/\number}

\draw[draw=none,fill=blue,opacity=0.2,] (0,0) circle (1cm);
\draw[draw=none,fill=red,opacity=0.2,even odd rule] (0,0) circle (1cm) (0,0) circle (2.7cm);
\draw[fill=white,draw = white] (-3,-3) rectangle (-0.7, 3);

\foreach \s in {1,...,\number}
{
    \node[fill =red] at ({\degree * (\s-3 )}:\radius) (\s) {};
}

\node[draw,fill = blue] at (0,0) (B) {};

\foreach \s [evaluate=\s as \result using {int(Mod( (\s), \number)+1)}]in {1,...,4}
{
     \draw[line width=0.4mm] (\s) -- (\result) [red];
}
\foreach \s in {1,...,\number}
{
    \draw[red,line width=0.4mm]  (\s) -- ($(\s)!0.5!(B)$) ;
    \draw[blue,line width=0.4mm]  (B) -- ($(B)!0.5!(\s)$) ;
}

\end{scope}
\begin{scope}[rotate=90]
\def \number {5}
\def \radius {2cm}
\def \degree {250/\number}

\draw[draw=none,fill=blue,opacity=0.2,] (0,0) circle (1cm);
\draw[draw=none,fill=red,opacity=0.2,even odd rule] (0,0) circle (1cm) (0,0) circle (2.7cm);
\draw[fill=white,draw = white] (-3,-3) rectangle (-0.7, 3);

\foreach \s in {1,...,\number}
{
    \node[fill =red] at ({\degree * (\s-3 )}:\radius) (\s) {};
}

\node[draw,fill = blue] at (0,0) (B) {};

\foreach \s in {1,...,\number}
{
    \draw[red,line width=0.4mm]  (\s) -- ($(\s)!0.5!(B)$) ;
    \draw[blue,line width=0.4mm]  (B) -- ($(B)!0.5!(\s)$) ;
}\foreach \s in {1,...,\number}
{
    \draw[red,line width=0.4mm]  (\s) -- ($(\s)!0.5!(B)$) ;
    \draw[blue,line width=0.4mm]  (B) -- ($(B)!0.5!(\s)$) ;
}
\foreach \s in {2,...,5}
{
    \draw[line width=0.4mm]  (\s) -- (1)[red];
}
\foreach \s in {3,...,5}
{
    \draw[line width=0.4mm]  (\s) -- (2)[red];
}
\foreach \s in {4,...,\number}
{
    \draw[line width=0.4mm]  (\s) -- (3)[red];
}

\foreach \s [evaluate=\s as \result using {int(Mod( (\s), \number)+1)}]in {1,...,5}
{
      \draw[line width=0.4mm] (\s) -- (\result) [red];
}
\end{scope}
\end{tikzpicture}
\vspace{-2.2cm}
    \caption{Illustration of the proposed \emph{Local Mixup} method. On the left, only vanilla samples are used, without data augmentation. Ground truth is depicted in filled regions. On the middle we depict \emph{Local Mixup} where we only interpolate samples which are close enough, leading to no contradiction with ground thuth. On the right we depict \emph{Mixup} in which we interpolate all samples, leading to contradictory virtual samples.}
    \label{fig:method}
    \vspace{-0.1cm}
\end{figure*}

Here are our main contributions:
\begin{itemize}[noitemsep,topsep=0pt]
    \item We introduce \emph{Local Mixup}, a mixup method depending on a single parameter whose extremes correspond to classical \emph{Mixup} and Vanilla.
    \item In dimension one, we prove that \emph{Local Mixup} allows to select a bias/variance trade-off.
    \item In higher dimensions, we show that \emph{Local Mixup} can help achieve more accurate models than classical \emph{Mixup} using standard vision datasets.
    \item Our work contributes more broadly to better understanding the impact of \emph{Mixup} during training.
\end{itemize}

\section{Related Work}

\textbf{Introducing notations:}
In Machine or Deep Learning, a training dataset $\mathcal{D}_{train}$ is used to learn the model's parameters, and a test one $\mathcal{D}_{test}$ is used to evaluate the performance of the model on previously unseen inputs~\citep{bishop2006pattern}. We also consider that both input and output data lie in metric spaces $(\mathcal{X},d_X)$ and $(\mathcal{Y},d_\mathcal{Y})$. Typically, $\mathcal{X}$ and $\mathcal{Y}$ are assumed to be Euclidean spaces with the usual metrics. We denote by $f:\mathcal{X} \to \mathcal{Y}$ the parametric model to be trained and by $\mathcal{F}$ the hypothesis set, i.e. the set containing all candidate parametrizations of the model  $f\in\mathcal{F}$.


To train our model, we use an \emph{error function} $\mathcal{L}$ that measures the discrepancy between the model outputs and expected ones. Training the model amounts to minimizing the training loss while generalization may be quantitavely evaluated by the test loss: 
\begin{align*}
    L_{vanilla} &= {\sum_{(\mathbf{x},\mathbf{y})\in \mathcal{D}}{\mathcal{L}(f(\mathbf{x}),\mathbf{y}}}). \\
\end{align*}

\textbf{Data augmentation and mixup:}
To improve generalization one can use regularization techniques~\citep{goodfellow2016deep}. Among them, data augmentation is a form of data-dependent regularization~\citep{guo2019mixup}.
It artificially generates new samples, resulting in increasing $\mathcal{D}_{train}$~\citep{dataaugmentation}, and can apply on the outputs $\mathbf{y}$~\citep{noisylabel} or on the inputs $\mathbf{x}$~\citep{colorful,cutoutorig,cutmixorig,autoaugment,horizontalflip,horizontalflip3}.




The use of data-dependent methods relying on some sort of \emph{mixing} has recently emerged~\citep{zhang2017mixup,manifold,cutmixorig,cutoutorig,augmix,puzzlemix,remix,automix,stackmix,batchmixup,mixmo}. They usually mix two or more inputs and the corresponding labels.
 
The pioneering mixing method is \emph{Mixup}~\citep{zhang2017mixup}, whose mixed samples $(\tilde x, \tilde y)$ are generated by linear interpolations between pairs of samples, i.e. $\mathbf{\tilde x}_{i,j,\lambda} = \lambda \mathbf{x}_i + (1-\lambda)\mathbf{x}_j$ and $\mathbf{\tilde y}_{i,j,\lambda} = \lambda \mathbf{y}_i + (1-\lambda)\mathbf{y}_j$ for some training samples $(\mathbf{x}_i,\mathbf{y}_i)$ and $(\mathbf{x}_j, \mathbf{y}_j)$ and some $\lambda \in [0,1]$ . The \emph{Mixup} training criterion is defined as:
\begin{definition}[Mixup Criterion] Let $\lambda~\sim~Beta[\alpha, \beta]$, $i,j$ discrete variables uniformly drawn with repetitions in $\{0, \dots, n-1 \}$. $f^*$ minimizes the Mixup criterion if: 
\begin{align*}
f^* = \underset{f \in \mathcal{F}}{\mathrm{argmin}} \  \frac{1}{n^2} \mathbb{E}_{\lambda}\left[\underbrace{\sum_{\mathcal{D}_{train}^2}\mathcal{L} \left( \mathbf{\tilde y}_{i,j,\lambda} ,f(\mathbf{\tilde  x}_{i,j,\lambda})\right)}_{L_{mixup}}\right]. 
   \end{align*}

\end{definition}

In other words, \emph{Mixup} encourages the model $f$ to associate linearly interpolated inputs with the corresponding linearly interpolated outputs~\citep{zhang2017mixup}. The positive effect of this linear behavior in between samples questioned several authors who aimed at explaining theoretically and empirically \emph{Mixup}. \citet{carratino2020mixup} shows that Mixup can be interpreted as the combination of a data transformation and a data perturbation. A first transform shrinks both inputs and outputs towards their mean. The second transform applies a zero mean perturbation. The proof is given by reformulating the Mixup loss. \citet{Lipmixup} highlight that \emph{Mixup} impacts the Lipschitz constant $L$ of the gradient of the network.

\textbf{Improvements over mixup:}
In other works, authors propose to improve \emph{Mixup} using various approaches. For example in~\citep{remix}, the idea is to use different $\lambda_x,\lambda_y$ to mix the input and the outputs, in~\citep{automix,mixmo,cutmixorig}, the authors explore using other (i.e. nonlinear) interpolation methods, in~\citep{batchmixup,kmixup,stackmix} the authors extend the mixing to more than two elements.


\textbf{Our proposed approach:}
In this paper, we aim at avoiding the phenomenon described as \emph{manifold intrusion}, and introduced in~\citep{guo2019mixup}. This phenomenon is depicted in Figure~\ref{fig:method} on the right, where we see that virtual samples created through mixup between distant red samples lie outside the manifold domain for the red class. As we do not have access to the underlying manifold domains when we train a model, the rationale of our contribution is to favor interpolations between samples that are close enough in the input domain. Where the method described in~\citep{guo2019mixup} learns which interpolations should be kept through training, we advocate in this paper for a purely geometric approach where a decreasing weight is applied when computing the loss depending on the distance between interpolated samples.



\section{Mixup in dimension 1}\label{theomixup}

Let us consider the simple case where our model $f$ is defined on $\mathbb{R}$. Without loss of generality, let us consider that the training set $\mathcal{D}_{train} = \{x_i, y_i\}$ is ordered by increasing input, i.e, $x_i \leq x_{i+1}$.


For a given $\tilde x$, \emph{Mixup}'s loss implies that the output $f^*(\tilde x)$ of the model is determined by the set $\mathcal{E}(\tilde x)$ of all convex combinations that can be obtain $\tilde x$ from two training inputs $x_i$ and $x_j$: $\mathcal{E}(\tilde x)= \{ i,j,\lambda_{i,j} \vert \tilde x = \lambda_{ij} x_i+(1-\lambda_{ij})\ x_j\}$. 
 It is clear that for any $\tilde x \in [x_0,x_{n-1}]$, $\mathcal{E}(\tilde x)$ is non empty and finite. In practice, the distribution of $\lambda$ can be uniform  ~\citep{zhang2017mixup,manifold}  $\lambda \sim Beta(\alpha = 1, \beta = 1) = \mathcal{U}(0,1)$. In this case, we show that the output $f^*(\tilde x)$ of an input $x \in [x_0,x_n]$ is the barycenter of the target values corresponding to the points of $\mathcal{E}(\tilde x)$.
\begin{lemma}
   $\forall \tilde x \in [x_0,x_{n-1}]$,
    \begin{align}
    f^*(\tilde x) = \frac{1}{card(\mathcal{E}(\tilde x))} \sum_{(i,j,\lambda_{i,j}) \in {\mathcal{E}(\tilde x)}}\lambda_{i,j}y_i+(1-\lambda_{i,j})y_j.
    \label{eq:solution}
\end{align}
\end{lemma}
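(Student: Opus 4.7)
The natural approach is pointwise minimization of $L_{mixup}$ over $f$, which is legitimate because $\mathcal{F}$ is taken rich enough to treat each value $f(\tilde x)$ as an independent variable, and because the implicitly assumed squared-error loss $\mathcal{L}$ makes the objective strictly convex in those values. The proof thus consists in writing $L_{mixup}$ as an integral over $\tilde x$ of a finite sum of convex quadratics in $f(\tilde x)$, then solving the resulting first-order conditions at each point.

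Concretely, I would first rewrite $\mathbb{E}_\lambda$ as a Lebesgue integral on $[0,1]$ (since $\lambda \sim \mathcal{U}(0,1)$), exchange it with the sum over $\mathcal{D}_{train}^2$, and for each pair $(i,j)$ with $x_i \neq x_j$ apply the change of variables $\tilde x = \lambda x_i + (1-\lambda) x_j$. This is a bijection between $[0,1]$ and the segment with endpoints $x_i$ and $x_j$, so each term of the sum becomes an integral over $\tilde x$ of $(\tilde y_{ij}(\tilde x) - f(\tilde x))^2$ multiplied by the Jacobian $1/|x_i - x_j|$, where $\tilde y_{ij}(\tilde x) = \lambda_{ij} y_i + (1-\lambda_{ij}) y_j$ for the unique $\lambda_{ij}$ solving $\tilde x = \lambda_{ij} x_i + (1-\lambda_{ij}) x_j$. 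After swapping sum and integral, the integrand at every $\tilde x \in [x_0, x_{n-1}]$ is a finite sum indexed exactly by the triples in $\mathcal{E}(\tilde x)$.

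Once this form is in place, the minimization decouples across $\tilde x$: at each such point we minimize a finite sum of squared terms in the single variable $f(\tilde x)$, and its minimizer is obtained by setting the derivative to zero, yielding the barycenter of the $\tilde y_{ij}(\tilde x)$. The step I expect to need the most care is the bookkeeping of the Jacobian factors: the change of variables produces weights $1/|x_i-x_j|$, and one must check that, once combined with the $1/n^2$ normalization (or after reinterpreting the conditional law of $(i,j,\lambda)$ given $\tilde x_{ij\lambda}=\tilde x$ as uniform on $\mathcal{E}(\tilde x)$), these weights reduce to the uniform ones needed to recover the unweighted average in the stated formula. After that, the first-order optimality condition directly gives the claimed expression for $f^*(\tilde x)$.
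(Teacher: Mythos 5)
Your strategy is at bottom the same as the paper's---treat $f(\tilde x)$ as a free variable, write the contribution of the loss at $\tilde x$ as a finite sum of squared terms indexed by $\mathcal{E}(\tilde x)$, and null the derivative---but you carry out the change of variables explicitly, and that is exactly where the argument breaks. The step you deferred (``one must check that \dots these weights reduce to the uniform ones'') cannot be completed: the density at $\tilde x$ of the virtual input generated by the pair $(i,j)$ is $1/|x_i-x_j|$ on the segment joining $x_i$ and $x_j$, so the conditional law of $(i,j,\lambda)$ given $\tilde x_{i,j,\lambda}=\tilde x$ assigns mass proportional to $1/|x_i-x_j|$ to each triple, not equal mass, and the $1/n^2$ normalization is a global constant that cannot absorb pair-dependent factors. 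Your first-order condition therefore yields the weighted barycenter
\begin{align*}
f^*(\tilde x) \;=\; \frac{\displaystyle\sum_{(i,j,\lambda_{i,j}) \in \mathcal{E}(\tilde x)} |x_i-x_j|^{-1}\bigl(\lambda_{i,j}y_i+(1-\lambda_{i,j})y_j\bigr)}{\displaystyle\sum_{(i,j,\lambda_{i,j}) \in \mathcal{E}(\tilde x)} |x_i-x_j|^{-1}},
\end{align*}
which agrees with the unweighted average of the statement only when all segments through $\tilde x$ have the same length. A concrete check: take $x_0=0$, $x_1=1$, $x_2=3$ with $y_0=y_2=0$, $y_1=1$, and $\tilde x=1/2$; the unweighted average over $\mathcal{E}(\tilde x)$ is $1/4$, while the pointwise minimizer of the Mixup objective is $3/8$.

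You should know that the paper's own proof makes precisely the leap you flagged: it asserts that the elements of $\mathcal{E}(\tilde x)$ are ``equally probable'' because $i$, $j$ and $\lambda$ are uniform, which is the discrete form of the same unproven (and generally false) claim about the conditional law. So your more careful route does not rescue the stated formula; honestly completed, it derives a different one. The qualitative consequences drawn from the lemma (piecewise linearity of $f^*$ on each $[x_i,x_{i+1}]$ and continuity at the knots) do survive, because the weights $1/|x_i-x_j|$ are constant as $\tilde x$ ranges over such an interval, but the exact barycentric expression in the statement holds only up to replacing the uniform weights by $1/|x_i-x_j|$. If you want to keep your structure, either prove the weighted version, or state explicitly the (strong) additional hypothesis under which the weights are equal.
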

\begin{proof}
Let $ \tilde x \in [x_0,x_{n-1}]$ and $0\leq\lambda \leq 1$. For a given triplet $(i,j,\lambda) \in \mathcal{E}(\tilde x)$. We have $\mathbb{E}[\mathcal{L}(y_i,j,\lambda_{i,j},f^*(\tilde x)) \vert \tilde x, i,j,\lambda_{ij} ]= \mathcal{L}(y_i,j,\lambda_{i,j},f^*(\tilde x))$ as the value of $y_i,j,\lambda_{i,j}$ and $\tilde x$ are known. Then we minimize the error for all $y_i,j,\lambda_{i,j}$ given by $\mathcal{E}(\tilde x)$. Then the value of $f^*(x)$ is only determined by the sum of the losses over $\mathcal{E}(\tilde x)$
since the elements of $\mathcal{E}(\tilde x)$ are equally probable (distributions of $i,j,\lambda$ are uniform). 
\begin{align}
    \mathbb{E}[\mathcal{L}(f^*(\tilde x),y_{i,j,\lambda_{i,j}}) &= \sum_{\mathcal{E}(\tilde x)} \mathbb{E}[\mathcal{L}(f^*(\tilde x),y_{i,j,\lambda_{i,j}}) \vert \tilde x, i,j,\lambda_{ij}]  \nonumber\\
    &= \sum_{\mathcal{E}(\tilde x)}\mathcal{L}(f^*(\tilde x),y_{i,j,\lambda_{i,j})})
    \label{eq:summixup}
\end{align}
We assume $\mathcal{L}$ to be either the cross entropy or the squared L2 loss. In either case, by nulling the derivative of Equation~\eqref{eq:summixup} w.r.t. the value $f^*(\tilde{x})$, we get: 
\begin{align*}
    f^*(\tilde x) = \frac{1} {card(\mathcal{E}(\tilde x )))}\sum_{\mathcal{ E}(\tilde x)} y_{i,j,\lambda_{i,j}} 
\end{align*}
\end{proof}
A consequence of this lemma is the following theorem:
\begin{theorem}
     The function $f^*$ that minimizes the loss on the training set is piecewise linear on $[x_0,x_{n-1}]$, linear on each segment $[x_i,x_{i+1}]$ and defined by Equation \eqref{eq:solution}.
\end{theorem}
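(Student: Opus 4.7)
The plan is to apply the preceding lemma pointwise and reduce the claim to showing that the barycenter expression on the right-hand side of Equation \eqref{eq:solution} is an affine function of $\tilde{x}$ on each open subinterval $(x_k, x_{k+1})$; continuity across knots will then promote this to piecewise linearity on the whole range $[x_0, x_{n-1}]$.

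First, I would fix $k \in \{0, \dots, n-2\}$ and let $\tilde{x}$ vary in the open interval $(x_k, x_{k+1})$. The crucial structural observation is that the index part of $\mathcal{E}(\tilde{x})$ is \emph{locally constant} in $\tilde{x}$. Indeed, using the strict ordering $x_0 < x_1 < \dots < x_{n-1}$, a pair $(i,j)$ with $i \neq j$ produces a triplet $(i,j,\lambda_{i,j}) \in \mathcal{E}(\tilde{x})$ if and only if $\tilde{x}$ lies between $x_i$ and $x_j$, which, for $\tilde{x} \in (x_k, x_{k+1})$, is equivalent to $(i \leq k < j)$ or $(j \leq k < i)$. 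Thus the set of contributing index pairs, and hence the cardinality $|\mathcal{E}(\tilde{x})| = 2(k+1)(n-1-k)$, does not depend on the particular $\tilde{x} \in (x_k, x_{k+1})$.

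Second, for each such fixed ordered pair with, say, $x_i < x_j$, the interpolation coefficient solving $\tilde{x} = \lambda_{i,j} x_i + (1-\lambda_{i,j}) x_j$ reads $\lambda_{i,j} = (x_j - \tilde{x})/(x_j - x_i)$, which is affine in $\tilde{x}$. Consequently the contribution $\lambda_{i,j} y_i + (1-\lambda_{i,j}) y_j$ to the numerator is also affine in $\tilde{x}$. A finite sum of affine functions of $\tilde{x}$ divided by the constant $2(k+1)(n-1-k)$ remains affine, so the lemma yields that $f^*$ is affine on $(x_k, x_{k+1})$.

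Finally, I would extend by continuity to the endpoints: as $\tilde{x} \to x_k^{+}$, all pairs $(i,j)$ with $i = k$ satisfy $\lambda_{i,j} \to 1$ and those with $j = k$ satisfy $\lambda_{i,j} \to 0$, so their boundary contribution is exactly $y_k$; an analogous limit from the left of $x_k$ gives the same boundary value, and the affine formulas on $[x_{k-1}, x_k]$ and $[x_k, x_{k+1}]$ both evaluate consistently at $\tilde{x} = x_k$. This yields a well-defined, linear expression on each closed segment $[x_k, x_{k+1}]$, hence a piecewise linear $f^*$ on $[x_0, x_{n-1}]$. The only delicate point in the argument is this endpoint bookkeeping; the rest is a routine combinatorial identification of $\mathcal{E}(\tilde{x})$ combined with the affineness of $\lambda_{i,j}$ in $\tilde{x}$.
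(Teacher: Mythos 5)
Your argument is, in substance, the same as the paper's: on each open interval $(x_k,x_{k+1})$ the set of contributing index pairs is constant, each $\lambda_{i,j}$ is affine in $\tilde x$, so the barycenter in Equation \eqref{eq:solution} is affine there; continuity at the knots is then argued from the fact that the pairs which appear or disappear enter with $\lambda\to 0$ or $\lambda\to 1$. The extra detail you supply --- the characterization ``$(i\le k<j)$ or $(j\le k<i)$'', the count $2(k+1)(n-1-k)$, and the formula $\lambda_{i,j}=(x_j-\tilde x)/(x_j-x_i)$ --- is correct and makes the interior part of the argument fully rigorous (modulo the harmless assumption that the $x_i$ are strictly increasing).

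The endpoint step, however --- the one you yourself flag as delicate --- does not go through, and the same gap sits in the paper's one-line continuity argument. The pairs that appear or disappear at $x_k$ do contribute exactly $y_k$ in the limit, but that is not enough: the normalizing cardinality jumps from $2k(n-k)$ on the left of $x_k$ to $2(k+1)(n-1-k)$ on the right, and the number of appearing/disappearing pairs differs on the two sides ($2k$ versus $2(n-1-k)$), so the two one-sided limits are averages of the same ``crossing'' contributions plus \emph{different} multiplicities of $y_k$, over \emph{different} denominators. Concretely, take $n=4$, $x_i=i$ and $y=(0,1,0,0)$: the limit of Equation \eqref{eq:solution} as $\tilde x\to x_1^{+}$ is $4/8=1/2$, while the limit as $\tilde x\to x_1^{-}$ is $2/6=1/3$ (and the value at $\tilde x=x_1$ itself is $6/10$). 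Hence the affine pieces need not match at the knots, and the assertion that $f^*$ is linear on each \emph{closed} segment (equivalently, continuous at the training points) is not established by ``new combinations carry $\lambda\in\{0,1\}$''; it would require either a weaker statement (affine on each open interval) or a different weighting of the elements of $\mathcal{E}(\tilde x)$.
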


When $\tilde{x}$ varies in $[x_i,x_{i+1}]$, the set of possible combinations (between training samples) leading to $\tilde{x}$ does not change, only the corresponding coefficients $\lambda$ vary linearly. Since the expression of Equation~\eqref{eq:solution} is linear in each of those coefficients, $f^*$ is itself linear as a function of $\tilde x$. The set of possible combinations will change whenever $\tilde x$ switches to another interval, e.g. $[x_{i-1},x_i]$. In this case new combinations are possible and others may disappear, leading to another linear function. $f^*$ is still continuous everywhere because new or disappearing combinations are associated either to $\lambda = 0$ or $\lambda=1$ for $\tilde{x} = x_{j}$ and $ j \in \{1,\cdots,n\}$.

In practice inferring a function $f^*$ that minimizes that the Mixup Criterion is usually not desired in machine learning, and one looks for $f$ with a sufficiently small loss to have a regularizing effect. Indeed $f^*$ is not likely to generalize well. Still, we note that it tends to an average of convex combinations and thus leads to a model with a low variance. 

\section{Local Mixup}


\subsection{Locality graphs}
Consider a (training) dataset $\mathcal{D}$ made of pairs $(\mathbf{x}, \mathbf{y})$. We propose to build a graph from $\mathcal{D}$ as follows. We define $G_{\mathcal{D}} = \langle V, \mathbf{W} \rangle$ where $V = \{\mathbf{x} \ \vert \ \exists \mathbf{y}, (\mathbf{x},\mathbf{y}) \in \mathcal{D}\}$. The symmetric real matrix $\mathbf{W}$ is based on $D$, where $D$ is the pairwise distance matrix $D[i,j] = d_\mathcal{X}(\mathbf{x}_i, \mathbf{x}_j)$.

In this work, we consider various ways to obtain $\mathbf{W}$, but the rationale is always the same: to obtain a similarity matrix where large weights correspond to closest pairs of samples. Namely, we consider $K$-nearest neighbors graphs, where we set to 1 weights of target vertices corresponding to the $K$ closest samples for a given source vertex and 0 otherwise; thresholded graphs where $\mathbf{W}[i,j] = \phi(D[i, j])$ and $\phi(d) = \mathbf{1}_{d \leq \varepsilon}$; smooth decreasing exponential graphs where $\mathbf{W}[i, j] = \exp( -\alpha \mathbf{D}[i,j])$. The loss is then weighted using $\mathbf{W}$: 
 \begin{align}
L_{\text{local mixup}} = \sum_{\mathcal{D}_{train}^2}\mathbf{W}[i, j]\mathcal{L}\left( \mathbf{\tilde y}_{i,j,\lambda} ,f(\mathbf{\tilde  x}_{i,j,\lambda})\right).
\end{align} 
For computational cost considerations, we compute a graph for each batch (random subset) of samples during stochastic gradient descent. As such, the weights associating two samples can vary depending on the chosen graph and random batch.

In the extreme case where some weights are 0, the corresponding virtual samples are discarded during gradient descent, resulting in only considering local interpolations of samples, hence the name \emph{Local Mixup}.

\subsection{Low dimension}

In this section, we are interested in proving that \emph{Local Mixup} allows to tune a trade-off between bias and variance on trained models. For this purpose, we simplify the problem to dimension 1 and only consider $K$-nearest neighbor graphs.

In this case, note that varying $K$ can create a range of settings where $K=0$ boils down to vanilla training and $K \geq n$ where $n$ is the number of training samples boils down to classical \emph{Mixup}.

\subsubsection{Local Mixup and the bias/variance trade-off}

Let us first recall the definitions of the bias and variance in the context of a machine learning problem.

\begin{definition}[Bias and Variance] Let us consider a training set $\mathcal{D}_{train}$ and a function $f$ from $\mathcal{X}$ to $\mathcal{Y}$. We define Bias and Variance as follow:
\begin{itemize}
    \item \emph{Bias}: $Bias(f)^2 = \mathbb{E}_{train}[(f(x)-y)^2]$.
    \item \emph{Variance}: $Var(f)= \mathbb{E}_{train} [(f-\mathbb{E}_{train}[f])^2]$.
\end{itemize}
\end{definition}

We consider two settings. In the first one, the input domain $\mathbb{Z}/n\mathbb{Z}$ is periodic and thus the number of samples is finite. In the second one, the input domain $\mathbb{Z}$ is infinite and outputs are independent and identically (i.i.d) generated using a random variable.
\newline 

\textbf{Periodic setting}

Let us consider that the training set $\mathcal{D}_{train}$ is made of pairs $(x,y)$, where $\{x \ \vert \ \exists y, (x,y) \in \mathcal{D}_{train}\} = \mathbb{Z}/n\mathbb{Z}$. We also consider $d_\mathcal{X}(x, x') = |x - x'| \in \{0,\cdots,n-1\}$.

In this case, we can write explicit formulations of $f^*_{K}$, the function that minimizes the \emph{Local Mixup} criterion for $K$-nearest neighbors graphs. Following similar arguments to those used to obtain Equation~\eqref{eq:solution}: for a given $x_i$ we know that the optimal value for $f^*_K(x_i)$ would be an average of the the $\tilde y$ that correspond to the possible interpolations.
we obtain:
\begin{align}
    \forall x_i \in \mathbb{Z}/n\mathbb{Z}, f^*_K(x_i) &= \frac{1}{K(K+3)/2}(2 K y_i +S_K(x_i)) \label{eq:fkmixup},
\end{align} 
where $S_K(x_i)$ is defined recursively as follows:
\begin{equation}
    S_{K+1} = \left\{\begin{array}{ll}0& \text{if } K = 0\\S_{K}(x_i) + A_{K+1}(x_i)& \forall K \geq 1 \end{array}\right..
    \label{eq:skexpression}
\end{equation}
and:
\begin{align*}
A_K(x_i) &= \frac{1}{K} \sum_{k=1}^{K-1} (K-k)\cdot y_{i-k} + k\cdot y_{i+K-k}.
\end{align*}
On Figure~\ref{fig:illustrationfk} we depicted for a given $x_i$ the different interpolations and $\tilde y$ that contribute to $f_K(x_i)$. In blue the interpolation between $x_i$ and its direct neighbors, in red the interpolation between points other than $x_i$ that happen to intersect $x_i$. As we increase $K$, the influence of $S_K$ (red points) increases. 

\begin{figure}
     \begin{tikzpicture}
 \begin{axis}[
xticklabels={$\tiny{x_{i-2}}$,$\tiny{x_{i-1}}$,$\tiny{x_{i}}$,$\tiny{x_{i+1}}$,$\tiny{x_{i+2}}$},
xtick={1,2,3,4,5},
xmajorgrids,
ymajorgrids,
yticklabels={,,},
ymin=8, ymax=26,
width=5cm, height=4cm]
\addplot[only marks,blue] coordinates { (1,10)(2,20)(3,10)(4,20)(5,25)};
\addplot[only marks,red] coordinates { (3,20)};
\addplot[mark=none, red, dashed,name path=indirect] coordinates {(2,20) (4,20)};
\addplot[mark=none, blue,name path=direct] coordinates {(2,20) (3,10) (4,20)};
\addplot[blue!20] fill between[of=indirect and direct];

\end{axis}


\begin{axis}[
xshift=4cm,
xticklabels={$\tiny{x_{i-2}}$,$\tiny{x_{i-1}}$,$\tiny{x_{i}}$,$\tiny{x_{i+1}}$,$\tiny{x_{i+2}}$},
xtick={1,2,3,4,5},
xmajorgrids,
ymajorgrids,
yticklabels={,,},
ymin=8, ymax=26,
width=5cm, height=4cm]

\addplot[only marks,blue] coordinates { (1,10)(2,20)(3,10)(4,20)(5,25)};
\addplot[only marks,red] coordinates { (3,20)};
\addplot[only marks,red] coordinates { (3,16.7)};
\addplot[only marks,red] coordinates { (3,21.7)};
\addplot[mark=none, red, dashed,name path=indirect] coordinates {(2,20) (4,20)};
\addplot[mark=none, blue,name path=direct] coordinates {(2,20) (3,10) (4,20)};
\addplot[mark=none, blue,name path=direct2] coordinates {(1,10) (3,10) (4,20)};
\addplot[mark=none, blue,name path=direct3] coordinates {(2,20) (3,10) (5,25)};
\addplot[mark=none, red, dashed,name path=indirect2] coordinates {(1,10) (4,20)};
\addplot[mark=none, red, dashed,name path=indirect3] coordinates {(2,20) (5,25)};
\addplot[blue!20] fill between[of=indirect and direct];
\addplot[blue!20] fill between[of=indirect2 and direct2];
\addplot[blue!20] fill between[of=indirect3 and direct3];
\end{axis}
\end{tikzpicture} 

\caption{We depict here the terms of $f^*_K(x_i)$ given by Eq~\eqref{eq:fkmixup} for different K. In blue the interpolations corresponding to $2Ky_i$ and in red the terms of the sum $S_K$. On the right, $K =2$ and on the left $K = 3$. }
\label{fig:illustrationfk}
\end{figure}
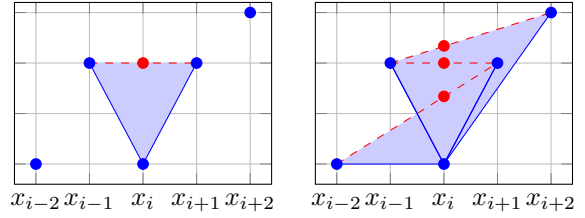


We obtain the following Lemma, showing that the expected value of $f^*_K$ is invariant with respect to $K$:
\begin{lemma}\label{lemmaE(fk)}[Expected value of $f^*_k$] For any $K$, the expected value of $f^*_K$ is 
    \begin{align}
    \mathbb{E}_{train}[f^*_K] = \mathbb{E}_{train}[y].
\end{align}
\end{lemma}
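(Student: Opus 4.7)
The plan is to exploit linearity of expectation together with the stationarity (translation invariance) of the training distribution on $\mathbb{Z}/n\mathbb{Z}$: since $\mathbb{E}_{train}$ averages uniformly over the index $i$, we have $\mathbb{E}_{train}[y_{i+k}] = \mathbb{E}_{train}[y]$ for every fixed shift $k$. Applying this to Equation~\eqref{eq:fkmixup} reduces the problem to computing the expected value of $S_K$, which is purely a combinatorial/algebraic exercise and should cleanly cancel with the normalization $K(K+3)/2$.

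First, I would take the expectation of $A_K(x_i)$. By linearity and translation invariance,
\begin{align*}
\mathbb{E}_{train}[A_K(x_i)] &= \frac{1}{K}\sum_{k=1}^{K-1}\bigl((K-k)+k\bigr)\,\mathbb{E}_{train}[y] \\
&= (K-1)\,\mathbb{E}_{train}[y].
\end{align*}
Next, unrolling the recursion in Equation~\eqref{eq:skexpression} gives $S_K = \sum_{m=2}^{K} A_m$ for $K\geq 1$ (with $S_1=0$), hence
\begin{align*}
\mathbb{E}_{train}[S_K(x_i)] &= \sum_{m=2}^{K}(m-1)\,\mathbb{E}_{train}[y] \\
&= \frac{K(K-1)}{2}\,\mathbb{E}_{train}[y].
\end{align*}

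Substituting into Equation~\eqref{eq:fkmixup} and using $\mathbb{E}_{train}[y_i]=\mathbb{E}_{train}[y]$ yields
\begin{align*}
\mathbb{E}_{train}[f^*_K]
&= \frac{2}{K(K+3)}\left(2K + \frac{K(K-1)}{2}\right)\mathbb{E}_{train}[y] \\
&= \frac{K+3}{K+3}\,\mathbb{E}_{train}[y] = \mathbb{E}_{train}[y],
\end{align*}
which is the claim. I do not expect a real obstacle here: the only thing that could go wrong is an off-by-one in the recursion defining $S_K$, so the step I would double-check is the unrolling $S_K=\sum_{m=2}^K A_m$ against the base case $S_1=0$ spelled out in Equation~\eqref{eq:skexpression}. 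Note that the argument uses only translation invariance of the first moment, so it holds for any distribution of the $y_i$'s that is stationary under the cyclic shift on $\mathbb{Z}/n\mathbb{Z}$; this is why the statement does not depend on $K$.
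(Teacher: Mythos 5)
Your proof is correct and follows essentially the same route as the paper's: both rely on the cyclic structure of $\mathbb{Z}/n\mathbb{Z}$ (you phrase it as translation invariance $\mathbb{E}_{train}[y_{i+k}]=\mathbb{E}_{train}[y]$, the paper as periodicity $y_{i+n}=y_i$ after summing over $i$) to reduce $\mathbb{E}_{train}[S_K]$ to $\tfrac{K(K-1)}{2}\mathbb{E}_{train}[y]$, and then simplify against the normalization $K(K+3)/2$. Your intermediate step $\mathbb{E}_{train}[A_m]=(m-1)\mathbb{E}_{train}[y]$ and the unrolling of the recursion are consistent with the paper's computation, so there is no gap.
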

\begin{proof}
\begin{equation*}
    \mathbb{E}_{train}[f^*_K] = \frac{1}{n}\sum_{i=1}^n f^*_K(x_i)\\
\end{equation*}
\begin{equation*}
    =\frac{2}{nK(K+3)}(2nK\mathbb{E}_{train}[y]+ \sum_{i=1}^n \sum_{k=1}^{K}A_k(x_i)). \\
\end{equation*}
and using the fact that $y_{i+n}= y_i$:
\begin{align*}
    &\sum_{i=1}^n \sum_{k=1}^{K}A_k(x_i) = \sum_{i=1}^n \sum_{k=1}^{K}\sum_{l=1}^{k-1}\frac{k-l}{k}y_{i-l}+\frac{l}{k}y_{i+k-l}\\
    &= \sum_{i=1}^n y_{i}  \sum_{k=1}^{K}\sum_{l=1}^{k-1}1
    = n\mathbb{E}_{train}[y] \frac{K(K-1)}{2}.
\end{align*}
then $\mathbb{E}_{train}[f^*_K] = \mathbb{E}_{train}[y]$.
\end{proof}

We obtain the following theorem:

\begin{theorem}[Convergence of $f^*_K$ in the periodic setting]
As $K$ grows, it holds that:
    \begin{align}
     &\forall x_i \in \mathbb{Z}/n\mathbb{Z}, \nonumber\\
     &f_K^*(x_i) \to \mathbb{E}_{D_{train}}[y],\\
     &Bias^2(f^*_K) \to \mathbb{E}_{train}[(y_i-\mathbb{E}_{train}[y])^2],\\
     &Var(f^*_K) =  \mathbb{E}_{train}[\left(f^*_K(x_i)-\mathbb{E}_{train}[f^*_K(x_i]\right)^2] \to 0, \\
     &Var(f^*_K) \text{ is eventually nonincreasing}. \nonumber
    \end{align}
\end{theorem}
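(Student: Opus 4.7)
The plan is to reduce the four conclusions to a single asymptotic statement on $g_K(x_i) := f^*_K(x_i) - \bar y$, where $\bar y := \mathbb{E}_{train}[y]$. I would set $z_j := y_j - \bar y$ and decompose $S_K(x_i) = T_K(x_i) + \tfrac{K(K-1)}{2}\bar y$, where $T_K(x_i) := \sum_{l=1}^{K-1}\beta_l(K)(z_{i-l}+z_{i+l})$ and $\beta_l(K) := (K-l) - l(H_K-H_l)$, with $H_m$ the $m$-th harmonic number (the constant part of $S_K$ is fixed by Lemma~\ref{lemmaE(fk)}). A direct substitution in Equation~\eqref{eq:fkmixup} then yields
\[
g_K(x_i) \;=\; \frac{4\, z_i}{K+3} + \frac{2\, T_K(x_i)}{K(K+3)},
\]
so every claim reduces to controlling $T_K(x_i)$.

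For pointwise convergence $f^*_K(x_i) \to \bar y$, I would exploit that periodicity of $z$ together with $\sum_j z_j = 0$ makes the partial sums $C_i^{\pm}(L) := \sum_{m=1}^L z_{i \pm m}$ uniformly bounded by $(n-1)\max_j|z_j|$. The identities $\beta_{l+1}(K)-\beta_l(K) = -(H_K-H_l)$ and $\beta_{K-1}(K) = 1/K$ follow from a short computation, and a single Abel summation then gives $|T_K(x_i)| = O(K)$, whence $g_K(x_i) = O(1/K) \to 0$.

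The bias and variance limits are then immediate from pointwise convergence on the finite set $\mathbb{Z}/n\mathbb{Z}$: the finite average $Bias^2(f^*_K) = \tfrac{1}{n}\sum_i(f^*_K(x_i) - y_i)^2$ tends to $\mathbb{E}_{train}[(y-\bar y)^2]$, and, since Lemma~\ref{lemmaE(fk)} gives $\mathbb{E}_{train}[f^*_K] = \bar y$, we get $Var(f^*_K) = \tfrac{1}{n}\sum_i g_K(x_i)^2 \to 0$.

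The hardest step is eventual monotonicity of the variance. I would rewrite $Var(f^*_K) = 4\, V_K/(K+3)^2$ with $V_K := \tfrac{1}{n}\sum_i\bigl(2z_i + T_K(x_i)/K\bigr)^2$. Since $T_K(x_i)/K$ is the Cesàro mean of $A_k(x_i)$, a further Abel-type argument exploiting periodicity shows that $A_k(x_i)$ is Cesàro-convergent, so $V_K \to V_\infty \geq 0$. When $V_\infty > 0$, the ratio $Var(f^*_{K+1})/Var(f^*_K) = (V_{K+1}/V_K)\cdot((K+3)/(K+4))^2 = 1 - 2/(K+4) + o(1/K)$ is strictly less than $1$ for $K$ large, yielding the desired monotonicity. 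The degenerate case $V_\infty = 0$ is the main obstacle: I would handle it by extracting next-order asymptotics of $T_K(x_i)$ from the incremental identity $T_{K+1}(x_i) - T_K(x_i) = A_{K+1}(x_i) - K\bar y$, which follows from the recursion~\eqref{eq:skexpression}, and comparing $V_K$ with $V_{K+1}$ term by term using the same Abel bounds.
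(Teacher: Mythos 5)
Your proposal is essentially correct and takes a genuinely different, and in places more careful, route than the paper. The paper works with the uncentered outputs: it bounds $A_K$ directly via a lemma of the form $Mn\,\mathbb{E}_{train}(y)-\eta \leq A_K \leq (M+1)n\,\mathbb{E}_{train}(y)+\eta$ (proved only in supplementary material, and stated under the assumption $\mathbb{E}_{train}(y)\geq 0$), deduces $S_K \sim \tfrac{1}{2}\mathbb{E}_{train}(y)K^2$, and for monotonicity invokes an asymptotic equivalent of $\left(f_{K+1}\right)^2-\left(f_K\right)^2$ proportional to $\mathbb{E}_{train}^2[y]$. You instead center the outputs ($z_j = y_j-\bar y$), extract the exact coefficients $\beta_l(K)=(K-l)-l(H_K-H_l)$ (these are precisely the $a_l^{(K)}$ the paper later uses in the i.i.d.\ setting), and control the centered part $T_K$ by Abel summation using boundedness of the partial sums of a zero-mean periodic sequence. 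I checked the key identities: $\beta_{l+1}(K)-\beta_l(K)=-(H_K-H_l)$, $\beta_{K-1}(K)=1/K$, $\sum_l \beta_l(K)=K(K-1)/4$, and the resulting formula $g_K(x_i)=\tfrac{4z_i}{K+3}+\tfrac{2T_K(x_i)}{K(K+3)}$ all hold, and $\sum_{l<K}(H_K-H_l)=O(K)$ gives $T_K=O(K)$, hence $g_K=O(1/K)$. This buys you two things the paper's argument does not cleanly provide: the convergence claims hold with no sign assumption on $\mathbb{E}_{train}[y]$, and the monotonicity argument does not hinge on an equivalent that vanishes identically when $\mathbb{E}_{train}[y]=0$ (the paper's stated equivalent $-\tfrac{K}{C}\mathbb{E}_{train}^2[y]$ is also dimensionally suspect, since the left-hand side tends to $0$).

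The one genuine soft spot is the monotonicity step when $V_\infty=0$. Your ratio argument is sound when $V_\infty>0$ (and the required $V_{K+1}-V_K=o(1/K)$ does follow from the $O(\log K/K)$ convergence rate of the Ces\`aro mean of the centered $\tilde A_k$, so you should make that rate explicit rather than leave it implicit), but the degenerate case is only a plan, not a proof: ``extracting next-order asymptotics of $T_K$'' is exactly where the work lies, since eventual monotonicity of a sequence decaying faster than $1/K^2$ is not automatic. Note also the small imprecision that $T_K/K$ is the Ces\`aro mean of the \emph{centered} increments $A_k-(k-1)\bar y$, not of $A_k$ itself. To be fair, the paper's own proof does not resolve the degenerate case either, so your proposal is at least as complete as the published argument while being considerably more explicit about the mechanism.
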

\begin{proof}
We can explicitly write the limit of $S_K$. We first prove this lemma (the proof is omitted here but available as supplementary material):
\begin{lemma}
Let $K = Mn + r$, $M \in \mathbb{N}^*$ and $0\leq r<n-1$. We assume $\mathbb{E}_{train}(y) \geq 0$, then:
    \begin{align}
         &(M+1)n \cdot \mathbb{E}_{train}(y) + \eta \  \geq A_K \geq Mn \cdot\mathbb{E}_{train}(y) -\eta,
    \end{align}
with $\eta=\mathcal{O}(K\mathbb{E}_{train}(y))$.
\end{lemma} 

Then combined with Equation~\eqref{eq:skexpression} we can demonstrate the convergence of the sum $S_K$ and find its limit:
\begin{corollary}
    For $K = NM \to \infty$
    \begin{align}
        S_K &\to \frac{1}{2}\sum_{i=1}^n y_iM^2n  =\frac{1}{2} \mathbb{E}_{train}(y)K^2.
    \end{align}
\end{corollary}

As a result, given Equation~\eqref{eq:fkmixup}, the limit of $f^*_K$ is $\mathbb{E}_{train}(y)$.

To prove the monotonicity of the variance we want to show: $Var(f^*_{K+1}) \leq Var(f^*_K)$ for $K$ large enough. We use the König-Huygens theorem and Lemma~\ref{lemmaE(fk)} to compute the difference between the two variances:
\begin{align*}
    & Var(f_{K+1}) - Var(f_K) \\
    &=\mathbb{E}_{D_{train}}[\left(f_{K+1}(x)\right)^2] -  \mathbb{E}_{D_{train}}[\left(f_K(x)\right)^2] \\
    &= \mathbb{E}_{D_{train}}[\left(f_{K+1}(x)\right)^2 -  \left(f_K(x)\right)^2].
\end{align*}
We then show that for any $x\in [x_0,x_{n-1}]$ and $K$ large enough, $\left(f_{K+1}(x)\right)^2 \leq \left(f_{K}(x)\right)^2$. To do so we get an asymptotic equivalent: 
\begin{align*}
&\left(f_{K+1}(x)\right)^2 - \left(f_{K}(x)\right)^2 \sim  -\frac{ K }{C} \cdot E_{train}^2[y],\\
& \text{where C is a positive constant}.\qedhere
\end{align*}\end{proof}

This theorem states two main results: 1) in the case of \emph{Mixup} the function that minimize the loss $f^*$ has zero variance and converges to $\mathbb{E}_{train}[y]$. 2). Eventually the variance of the function that minimizes the \emph{Local Mixup} criterion is decreasing, showing that the proposed \emph{Local Mixup} can indeed tune the trade-off between the bias and variance.

\textbf{i.i.d random output setting}

Let us now consider that the training set is made of inputs $\{x \ \vert \ \exists y, (x,y) \in \mathcal{D}_{train}\} = \mathbb{Z}$ and $y_i$ are i.i.d. according to a random variable $R$ of variance $\sigma^2$.

\begin{theorem}[]
For a signal with i.i.d outputs, the variance is eventually bounded by:
\begin{align}
     \frac{4^2\sigma^2}{K^2} &\leq Var(f_K(x_i)) \leq \frac{ 8\sigma^2}{K}.
\end{align}

\end{theorem}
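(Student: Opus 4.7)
The approach is to exploit the i.i.d.\ hypothesis so that the variance computation reduces to bounding a sum of squared coefficients. Equation~\eqref{eq:fkmixup} remains valid locally in the i.i.d.\ setting since its derivation involves only finitely many training outputs near $x_i$, so we can write $f^*_K(x_i)=\sum_{j=-(K-1)}^{K-1} c_j\,y_{i+j}$ with deterministic coefficients that depend only on $K$. Independence of the outputs then gives
\begin{equation*}
Var(f_K^*(x_i))=\sigma^2\sum_{j=-(K-1)}^{K-1} c_j^{\,2},
\end{equation*}
so the problem reduces to two-sided bounds on $\sum_j c_j^{\,2}$.

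I would start by extracting the coefficients explicitly. Unfolding Equation~\eqref{eq:skexpression} together with the definition of $A_k$, the symmetric roles of $l$ and $k-l$ in $A_k$ force $c_m=c_{-m}$, with $c_0=\tfrac{4}{K+3}$ (only $2Ky_i$ contributes to $y_i$) and
\begin{equation*}
c_m=\frac{2}{K(K+3)}\sum_{k=m+1}^{K}\frac{k-m}{k}=\frac{2}{K(K+3)}\bigl[(K-m)-m(H_K-H_m)\bigr]
\end{equation*}
for $1\leq m\leq K-1$, where $H_K$ denotes the $K$-th harmonic number. Three structural facts then drive the proof. First, $c_m\geq 0$: setting $t=K/m$, positivity reduces to the elementary inequality $t-1\geq\ln t$, valid for all $t\geq 1$. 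Second, a short computation yields $c_m-c_{m+1}=\tfrac{2(H_K-H_m)}{K(K+3)}\geq 0$, and a direct comparison gives $c_0\geq c_1$ as well, making $c_0$ the overall maximum. Third, substituting $y_j\equiv 1$ into the Mixup criterion forces the minimizer $f_K^*\equiv 1$, hence $\sum_j c_j=1$.

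Both bounds then follow from one-line arguments. For the upper bound, since $c_j\geq 0$,
\begin{equation*}
\sum_j c_j^{\,2}\leq\bigl(\max_j c_j\bigr)\sum_j c_j=c_0\cdot 1=\frac{4}{K+3}\leq\frac{8}{K},
\end{equation*}
which yields $Var(f_K^*(x_i))\leq 8\sigma^2/K$. For the lower bound, Cauchy--Schwarz applied to the $2K-1$ nonzero coefficients gives $1=\bigl(\sum_j c_j\bigr)^2\leq (2K-1)\sum_j c_j^{\,2}$, so $\sum_j c_j^{\,2}\geq 1/(2K-1)$; the inequality $1/(2K-1)\geq 16/K^2$ rearranges to $K^2-32K+16\geq 0$, which holds as soon as $K\geq 32$, delivering the stated $16\sigma^2/K^2$ lower bound for $K$ eventually large. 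The main obstacle is the algebraic bookkeeping to turn the nested sums in~\eqref{eq:fkmixup} and~\eqref{eq:skexpression} into the closed form for $c_m$ and to verify positivity and monotonicity of the coefficients; once these are in hand, the ``max-times-sum'' and Cauchy--Schwarz estimates above close the argument.
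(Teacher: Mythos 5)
Your proof is correct, and while it shares the paper's high-level skeleton --- write $f^*_K(x_i)$ as a fixed linear combination of i.i.d.\ outputs with coefficients $c_0=\frac{4}{K+3}$ and $c_{\pm m}=\frac{2a_m^{K}}{K(K+3)}$, $a_m^{K}=\sum_{k=m+1}^{K}\frac{k-m}{k}$, and reduce everything to two-sided bounds on $\sum_j c_j^2$ --- the key estimates are genuinely different. The paper bounds each coefficient individually via $\frac{1}{K}\leq a_k^{K}\leq K$ and lets $K\to\infty$; you instead extract the closed form $a_m^{K}=(K-m)-m(H_K-H_m)$, establish nonnegativity, monotonic decrease away from $c_0$, and the normalization $\sum_j c_j=1$, and then close with the max-times-sum estimate for the upper bound and Cauchy--Schwarz for the lower bound. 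Your route buys two things. First, the upper bound $\sum_j c_j^2\leq c_0\leq \frac{4}{K+3}$ is cleaner than summing $K-1$ terms each bounded by $\left(\frac{2K}{K(K+3)}\right)^2$. Second, and more substantively, your lower bound actually delivers the stated constant: the paper's argument, read literally, retains only the $y_i$ term, which gives $\frac{16\sigma^2}{(K+3)^2}<\frac{16\sigma^2}{K^2}$, and the crude bound $a_k^{K}\geq\frac{1}{K}$ on the remaining terms contributes only $\mathcal{O}(\sigma^2 K^{-5})$, which is not enough to bridge the $\mathcal{O}(K^{-3})$ gap; your Cauchy--Schwarz bound $\sum_j c_j^2\geq\frac{1}{2K-1}$ dominates $\frac{16}{K^2}$ once $K\geq 32$, which is exactly the "eventually" in the statement, and in fact shows the variance is genuinely of order $\sigma^2/K$ rather than $\sigma^2/K^2$. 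The only point worth flagging is minor: your identity $\sum_j c_j=1$ deserves the one-line direct verification $2K+\frac{K(K-1)}{2}=\frac{K(K+3)}{2}$ rather than the appeal to the constant-output minimizer, but both are sound.
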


\begin{proof}
Let us choose $x_i$ and $K>1$. First observe that $f^*_K(x_i)$ is a sum of random variables. 
We rewrite $S_K$ with the coefficients $a_k^{K}= \sum_{l=k+1}^K \frac{l-k}{l}$: $S_K~=~\sum_{k =1}^{K-1} (y_{i-k}+y_{i+k}) a_k^{K}$. We obtain:
\begin{equation*}
    Var(f^*_K(x_i)) = Var\left( \frac{2\cdot(2Ky_i + S_K)}{K(K+3)}. \right)
    \end{equation*}
leading to:
\begin{align*}
    Var(f^*_K(x_i))= 4^2\left( \frac{K}{K(K+3)} \right)^2Var(y_i) \\ + \sum_{k=1}^{K-1}\left(\frac{2a_k^{(K)}}{K(K+3)}\right)^2 (Var(y_{i-k}) + Var(y_{i+k})).
\end{align*}
We use the fact that $\frac{1}{K}\leq  a_k^{K} \leq K$.

Then when $K \to \infty$:
\begin{align*}
   \frac{4^2\sigma^2}{K^2} &\leq Var(f_K(x_i)) \leq \frac{ 8\sigma^2}{K}.\qedhere
\end{align*} 
\end{proof}

\subsubsection{Invariance of linear models}

Interestingly, we can show that both \emph{Mixup} and \emph{Local Mixup} lead to the same optimal linear models, as stated in the following theorem:

\begin{theorem}
    For a linear model: $f(x) = ax + b, \ a,b \in \mathbb{R}$, the function $f^*$ that minimizes the loss of \emph{Mixup} and \emph{Local Mixup} is the same. 
\end{theorem}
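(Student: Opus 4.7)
My plan is to exploit that linear maps commute with convex combinations: for $f(x)=ax+b$, the residual at any virtual sample is itself a convex combination of endpoint residuals,
$\tilde y_{i,j,\lambda} - f(\tilde x_{i,j,\lambda}) = \lambda e_i + (1-\lambda) e_j$, with $e_i := y_i - a x_i - b$. This turns both criteria into convex quadratic forms in the vanilla residual vector $e$, so the theorem reduces to locating where a convex quadratic's gradient vanishes.

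Concretely, using the identity $(\lambda e_i + (1-\lambda) e_j)^2 = \lambda e_i^2 + (1-\lambda) e_j^2 - \lambda(1-\lambda)(e_i - e_j)^2$, taking the expectation over $\lambda$, and summing over $(i,j)$ (weighted by $\mathbf{W}[i,j]$ in the local case), I would rewrite the two losses as
\begin{align*}
L_{\text{mixup}}(a,b) &= \mu n \sum_i e_i^2 + \nu \left(\sum_i e_i\right)^2, \\
L_{\text{local mixup}}(a,b) &= e^{\top}(\mu \mathbf{D} + \nu \mathbf{W})e,
\end{align*}
with $\mu = 1-2q$, $\nu = 2q$, $q = \mathbb{E}[\lambda(1-\lambda)]$, and $\mathbf{D} = \mathrm{diag}(d_i)$ the degree matrix. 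The Mixup normal equations collapse immediately to the vanilla least-squares conditions $\sum_i e_i = 0$, $\sum_i x_i e_i = 0$: the extra term $\nu(\sum_i e_i)^2$ is non-negative and vanishes exactly on $\sum_i e_i = 0$, so minimising the sum is the same as minimising $\sum_i e_i^2$.

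For Local Mixup, the normal equations read $\mathbf{1}^{\top}(\mu \mathbf{D} + \nu \mathbf{W}) e = 0$ and $x^{\top}(\mu \mathbf{D} + \nu \mathbf{W}) e = 0$. I would collapse them to the same vanilla LS pair by invoking two structural features of the $K$-nearest-neighbour graph on $\mathbb{Z}/n\mathbb{Z}$: it is $K$-regular, so $\mathbf{D} = K \mathbf{I}$ and the first equation becomes $K \sum_i e_i = 0$; and it is translation invariant, so $\mathbf{W}\mathbf{1}$ and $\mathbf{W}x$ both lie in $\mathrm{span}(\mathbf{1},x)$, whence the second equation reduces to $\sum_i x_i e_i = 0$ once the first has been used. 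Both losses being convex quadratics in $(a,b)$ with the same stationary set, their minimisers coincide. The hardest step --- the one that genuinely uses the 1D periodic geometry --- is the reduction $\mathbf{W}x \in \mathrm{span}(\mathbf{1},x)$: for a generic symmetric $\mathbf{W}$ the minimisers need not agree, so this is really what makes the theorem go through.
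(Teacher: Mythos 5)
Your reduction of both criteria to quadratic forms in the vanilla residuals is correct, and the \emph{Mixup} half of your argument is sound --- indeed more rigorous than the paper's own proof, which instead invokes Equation~\eqref{eq:solution} to say the unconstrained minimiser is piecewise linear and then asserts without justification that restricting $\mathcal{F}$ to linear functions yields coefficients equal to ``the averages of the $(a_i,b_i)$''. Your identity $(\lambda e_i+(1-\lambda)e_j)^2=\lambda e_i^2+(1-\lambda)e_j^2-\lambda(1-\lambda)(e_i-e_j)^2$, the resulting form $\mu n\sum_i e_i^2+\nu\left(\sum_i e_i\right)^2$ with $\mu=1-2q>0$ (since $q=\mathbb{E}[\lambda(1-\lambda)]\leq 1/4$), and the observation that the vanilla least-squares fit simultaneously minimises both terms because its residuals sum to zero, all check out.

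The gap is in the \emph{Local Mixup} half, and it sits exactly where you located the weight of the proof. The claim $\mathbf{W}x\in\mathrm{span}(\mathbf{1},x)$ is false for the $K$-nearest-neighbour graph on $\mathbb{Z}/n\mathbb{Z}$ once the inputs are embedded in $\mathbb{R}$ as $\{0,\dots,n-1\}$, which is forced the moment one writes $f(x)=ax+b$: translation invariance of $\mathbf{W}$ is invariance modulo $n$, but $x$ is not periodic. Concretely, for $n=4$ and the cycle graph (each $i$ linked to $i\pm 1$), $x=(0,1,2,3)^{\top}$ gives $\mathbf{W}x=(4,2,4,2)^{\top}$, which is not an arithmetic progression and hence not in $\mathrm{span}(\mathbf{1},x)$; the second normal equation $x^{\top}(\mu\mathbf{D}+\nu\mathbf{W})e=0$ therefore does not collapse to $x^{\top}e=0$. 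Outside the periodic setting things are worse: for non-uniformly spaced $x_i$ a $K$-NN or thresholded graph is not regular, and since $\mathbf{1}^{\top}(\mu\mathbf{D}+\nu\mathbf{W})=d^{\top}$ (the degree vector), already the first normal equation reads $\sum_i d_i e_i=0$ rather than $\sum_i e_i=0$. For instance $x=(0,1,10)$, $y=(0,1,0)$ with the path graph on $\{1,2,3\}$ requires $e_1+2e_2+e_3=0$, which the vanilla (hence \emph{Mixup}) fit, with residuals proportional to $(-45,50,-5)$, violates. So your machinery, correctly carried through, shows that for the graphs the paper actually uses the stationarity conditions of the two criteria genuinely differ; the theorem needs an additional regularity or uniform-spacing hypothesis (or a different argument) rather than the structural claim you asserted.
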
 
\begin{proof}
    For mixup, we showed with Equation~\eqref{eq:solution} the function $f^*$ is a piecewise linear function. The same equation applies for \emph{Local Mixup} except that the set $E_x$ is smaller for \emph{Local Mixup} as the number of endpoints is restricted.
    As a piecewise linear function, linear on each segment $[x_i,x_{i+1}]$: $f^*$ can be written as $ f^* = a_i x  + b_i$ where each $(a_i,b_i)$ are defined on $[x_i,x_{i+1}]$.
    Let us consider $\mathcal{F}$ to be restricted to linear functions, then the coefficients $a$, $b$ are the averages of the $(a_i,b_i)$.
\end{proof}

\subsection{High Dimension and Lipschitz constraint}\label{Lipschistsection}
The proofs given in low dimension have some limitations. Basically, the averaging effect happens since any point $x$ within the interval $[x_1,x_n]$ can be written as at least one convex combination of pairs from the training set. Contradictions may occur as illustrated above when several combinations corresponds to $x$. In higher dimension such explicit contradictions are not necessarily expected. Still, we show that \emph{Local Mixup} has an impact on the Lipschitz constant of the networks.

First recall the definition of a $q$-Lipschitz function: 

\begin{definition}[Lipschitz Continuous and Lipschitz Constant]
    Given two metric spaces $(\mathcal{X},d_X), (\mathcal{Y},d_\mathcal{Y})$ and a function $f: \mathcal{X} \to \mathcal{Y}$, $f$ is Lipschitz continuous if there exists a real constant $q \geq 0$ s.t for all $x_i$ and $x_j$ in $\mathcal{X}$, 
    \begin{align}
         d_\mathcal{Y}\left(f(x_i),f(x_j)\right) \leq q d_\mathcal{X}(x_i,x_j).
    \end{align}
     If $f$ is $q$-Lipschitz continuous, we define the optimal Lipschitz constant $Q_{sup}$ as
     \begin{align}
          Q_{sup} = \sup_{x_i,x_j \in X, x_i \neq x_j} \frac{d_\mathcal{Y}\left(f(x_i),f(x_j)\right)}{d_\mathcal{X}(x_i,x_j)}.
     \end{align}
\end{definition}
 
For simplicity, let us consider a classification problem where $d_\mathcal{Y}$ is 0 if the two considered samples are of the same class and 1 otherwise.

Then the training set imposes a lower bound on the optimal Lipschitz constant:
\begin{align}
    Q_{sup} \geq \underbrace{\left(\min_{\mathbf{x}_i,\mathbf{x}_j \in \mathcal{D}, y_i \neq y_j} d_\mathcal{X}(x_i,x_j) \right)^{-1}}_{Q(D)}.
\end{align}

For \emph{Mixup} and \emph{Local Mixup}, the virtual samples increase the size of the training set, resulting in stronger constraints on the optimal Lipschitz constant.

In more details, consider the case of a thresholded graph with parameter $\varepsilon$ when using \emph{Local Mixup}. In this case, the increased training set for each class $\mathbf{y}$ can be written as $S_\varepsilon(\mathbf{y}) = \{ \lambda \mathbf{x}_i + (1-\lambda) \mathbf{x}_j \ \vert \  0\leq \lambda\leq 1 \ \mathbf{y}_i = \mathbf{y}_j = \mathbf{y}, d_{\mathcal{X}}(\mathbf{x}_i, \mathbf{x}_j) \leq \varepsilon  \},$ the set of all segments constructed from two samples that are close enough in the input domain and sharing the same label $\mathbf{y}$. We then obtain the following theorem:

\begin{theorem}\label{theoremLip}
    The lower bound $Q(D)$ is increasing with $\varepsilon$.
\end{theorem}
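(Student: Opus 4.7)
The plan is to reduce the statement to the elementary fact that taking a minimum over a larger set can only decrease its value, hence inverting yields a monotone nondecreasing quantity. The $\varepsilon$-dependent lower bound obtained by augmenting the training set with virtual samples is
$$Q_\varepsilon(D) = \left(\min_{\substack{\mathbf{x}\in S_\varepsilon(\mathbf{y}),\ \mathbf{x}'\in S_\varepsilon(\mathbf{y}') \\ \mathbf{y}\neq \mathbf{y}'}} d_{\mathcal{X}}(\mathbf{x},\mathbf{x}')\right)^{-1},$$
so the theorem amounts to showing that the inner minimum is nonincreasing in $\varepsilon$.

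First I would establish the key set inclusion: for $\varepsilon \leq \varepsilon'$ and every class label $\mathbf{y}$, $S_\varepsilon(\mathbf{y}) \subseteq S_{\varepsilon'}(\mathbf{y})$. This is immediate from the definition, since the three conditions $0\leq \lambda\leq 1$, $\mathbf{y}_i=\mathbf{y}_j=\mathbf{y}$, and $d_{\mathcal{X}}(\mathbf{x}_i,\mathbf{x}_j)\leq \varepsilon$ only interact with $\varepsilon$ through the last one, which is weakened when $\varepsilon$ grows. Next, since the feasible region of the minimization defining $Q_\varepsilon(D)^{-1}$ over pairs $(\mathbf{x},\mathbf{x}')$ belonging to $S_\varepsilon(\mathbf{y})\times S_\varepsilon(\mathbf{y}')$ with $\mathbf{y}\neq \mathbf{y}'$ is strictly enlarged when passing from $\varepsilon$ to $\varepsilon'$, the resulting minimum can only become smaller or stay equal. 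Inverting then yields $Q_\varepsilon(D)\leq Q_{\varepsilon'}(D)$.

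The only subtlety, which is the closest thing to an obstacle, is the degenerate case in which some virtual samples from two distinct classes coincide, making the minimum equal to zero and the inverse undefined. This can be handled with the convention $Q_\varepsilon(D) = +\infty$ in that case; once this threshold is crossed it persists for all larger $\varepsilon$ by the same inclusion argument, so monotonicity is preserved. Beyond this bookkeeping, the proof is essentially a one-line application of the monotonicity of the $\min$ operator under set inclusion.
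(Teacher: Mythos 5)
Your proposal is correct and follows essentially the same route as the paper, whose entire proof is the one-line observation that $S_\varepsilon(\mathbf{y})\subset S_{\varepsilon'}(\mathbf{y})$ for $\varepsilon \leq \varepsilon'$, so the minimum over the enlarged set can only decrease and its inverse can only increase. Your additional remark about the degenerate zero-distance case is a reasonable piece of bookkeeping that the paper omits, but it does not change the argument.
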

\begin{proof}
We directly use the inclusion $S_\varepsilon(\mathbf{y})\subset S_{\varepsilon'}(\mathbf{y}), \forall \varepsilon \leq \varepsilon'$.
\end{proof}

We shall show in the experiments that $\varepsilon$ can indeed impact $Q(D)$ on standard vision datasets.

\section{Experiments}
\subsection{Low dimension}
As stated in the introduction and~\citep{guo2019mixup}, \emph{Mixup} leads to interpolations that may be misleading for the model. To illustrate this effect, we consider a 2d toy dataset of two coiling spirals where such interpolations occur frequently. The two coiling spirals is a binary classification dataset: each spiral corresponds to a different class. We expect to retrieve better performance for \emph{Local Mixup} compared to \emph{Mixup}: local interpolations are likely to stay in the same spiral and therefore avoid manifold intrusion. For this experiment we use a thresholded graph with parameter $\varepsilon$.

To carry out this experiment, we generate 1000 samples for each class and add a Gaussian noise with standard deviation $\sigma = 1.5$ (controlling the spirals' thickness). A typical draw is depicted in Figure~\ref{fig:spirals}. We use a large value of $\sigma$ to avoid trivial solutions to the problem. Once the dataset is generated we split it randomly into two parts: a training set containing 80\% of the samples and a test set containing the remaining 20\% (used to compute the error rates).
\begin{figure}
    \centering
    \includegraphics[width = 8.4cm,height =4cm,trim={3cm 1.4cm 2cm 1.2cm},clip]{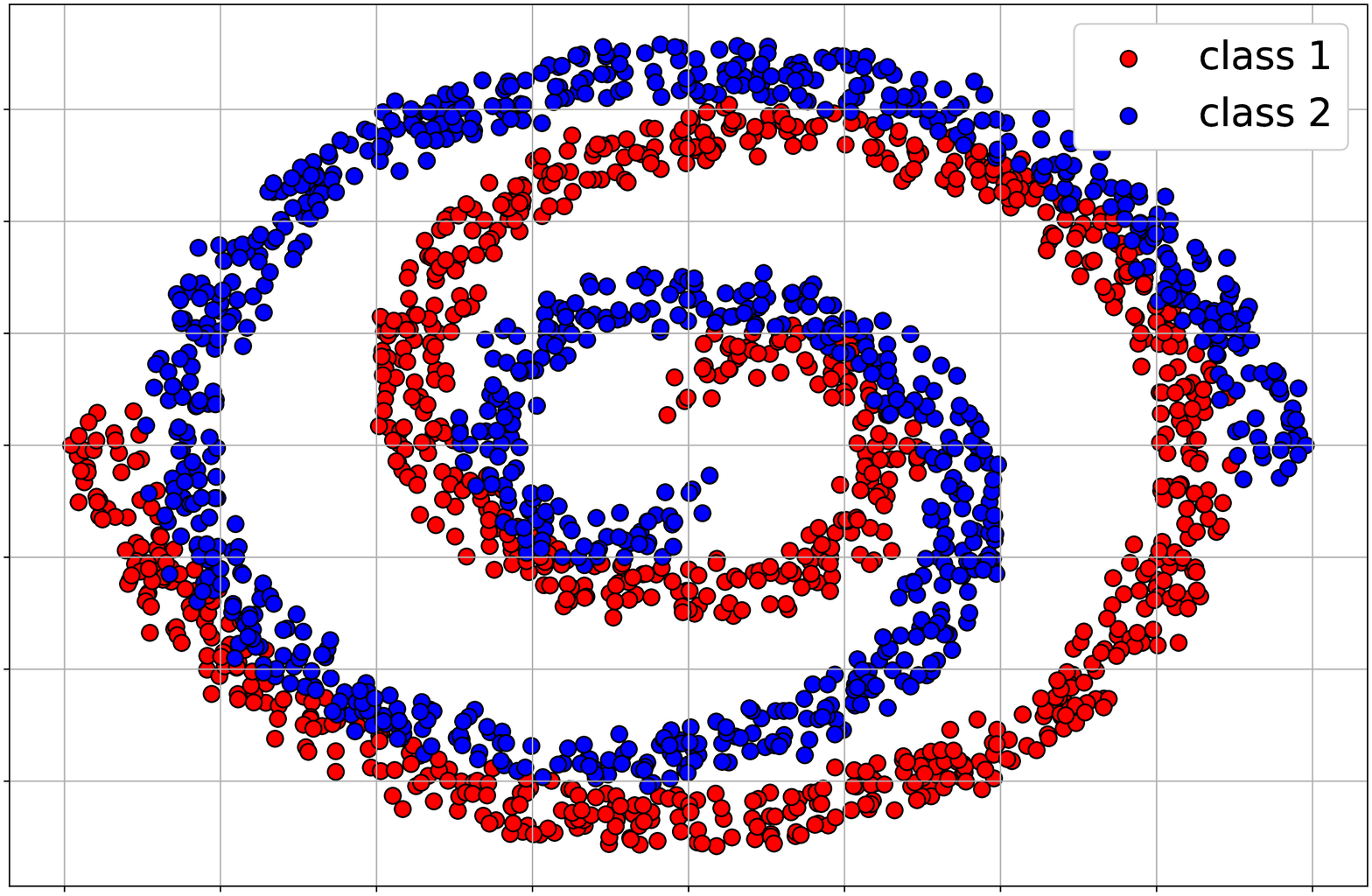}
    \caption{Illustration of the two coiling spiral dataset with 1000 samples per class and $\sigma = 1.5$.}
    \label{fig:spirals}
\end{figure}
We then use a fully connected neural network made of two hidden layers with 100 neurons and use the ReLU function as non linearity. We average the test errors over 1000 runs. 
For small values of $\varepsilon$ many weights of the graph are zero and thus the corresponding interpolations are disregarded into the loss. This means that for a given batch only a small proportion of samples are regarded to compute the loss. Without any correction, different values of $\varepsilon$ lead to different batch sizes. To avoid side effects, we vary the batch size so that in average the same number of samples are used to update the loss.

To select an appropriate value of $\varepsilon$, we first looked at the distribution of distances between pairs of inputs in the training set. This distribution is depicted in Figure~\ref{fig:histo}. We observe that the distribution is relatively uniform between 0 and 4, and as such in our experiments we vary $\varepsilon$ between 0 and 4 using steps of 0.5.

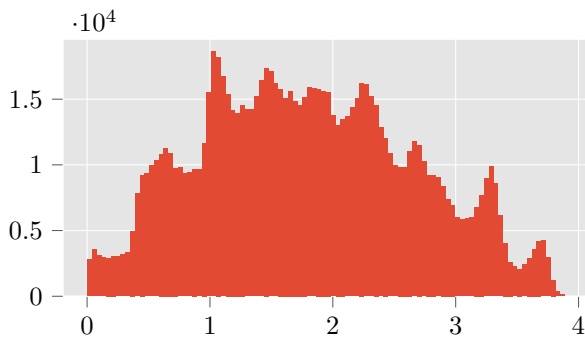
\begin{figure}
    \centering
    \begin{tikzpicture}

\definecolor{color0}{rgb}{0.886274509803922,0.290196078431373,0.2}\pgfplotsset{%
    width=.54\textwidth,
    height=.25\textwidth
}

\begin{axis}[
axis background/.style={fill=white!89.8039215686275!black},
axis line style={white},
tick align=outside,
tick pos=left,
x grid style={white},
xmajorgrids,
xmin=-0.19454230702101, xmax=4.08538844744121,
xtick style={color=white!33.3333333333333!black},
y grid style={white},
ymajorgrids,
ymin=0, ymax=19555.2,
ytick style={color=white!33.3333333333333!black},
width=.5\textwidth,
height=5cm
]
\draw[draw=none,fill=color0,very thin] (axis cs:0,0) rectangle (axis cs:0.038908461404202,2780);
\draw[draw=none,fill=color0,very thin] (axis cs:0.038908461404202,0) rectangle (axis cs:0.077816922808404,3560);
\draw[draw=none,fill=color0,very thin] (axis cs:0.077816922808404,0) rectangle (axis cs:0.116725384212606,3100);
\draw[draw=none,fill=color0,very thin] (axis cs:0.116725384212606,0) rectangle (axis cs:0.155633845616808,2960);
\draw[draw=none,fill=color0,very thin] (axis cs:0.155633845616808,0) rectangle (axis cs:0.19454230702101,2922);
\draw[draw=none,fill=color0,very thin] (axis cs:0.19454230702101,0) rectangle (axis cs:0.233450768425212,3054);
\draw[draw=none,fill=color0,very thin] (axis cs:0.233450768425212,0) rectangle (axis cs:0.272359229829414,3036);
\draw[draw=none,fill=color0,very thin] (axis cs:0.272359229829414,0) rectangle (axis cs:0.311267691233616,3178);
\draw[draw=none,fill=color0,very thin] (axis cs:0.311267691233616,0) rectangle (axis cs:0.350176152637818,3340);
\draw[draw=none,fill=color0,very thin] (axis cs:0.350176152637818,0) rectangle (axis cs:0.38908461404202,4954);
\draw[draw=none,fill=color0,very thin] (axis cs:0.38908461404202,0) rectangle (axis cs:0.427993075446222,7804);
\draw[draw=none,fill=color0,very thin] (axis cs:0.427993075446222,0) rectangle (axis cs:0.466901536850424,9240);
\draw[draw=none,fill=color0,very thin] (axis cs:0.466901536850424,0) rectangle (axis cs:0.505809998254626,9332);
\draw[draw=none,fill=color0,very thin] (axis cs:0.505809998254626,0) rectangle (axis cs:0.544718459658828,9946);
\draw[draw=none,fill=color0,very thin] (axis cs:0.544718459658828,0) rectangle (axis cs:0.58362692106303,10348);
\draw[draw=none,fill=color0,very thin] (axis cs:0.58362692106303,0) rectangle (axis cs:0.622535382467232,10828);
\draw[draw=none,fill=color0,very thin] (axis cs:0.622535382467232,0) rectangle (axis cs:0.661443843871434,11280);
\draw[draw=none,fill=color0,very thin] (axis cs:0.661443843871434,0) rectangle (axis cs:0.700352305275636,10916);
\draw[draw=none,fill=color0,very thin] (axis cs:0.700352305275636,0) rectangle (axis cs:0.739260766679838,9776);
\draw[draw=none,fill=color0,very thin] (axis cs:0.739260766679838,0) rectangle (axis cs:0.77816922808404,9782);
\draw[draw=none,fill=color0,very thin] (axis cs:0.77816922808404,0) rectangle (axis cs:0.817077689488242,9342);
\draw[draw=none,fill=color0,very thin] (axis cs:0.817077689488242,0) rectangle (axis cs:0.855986150892444,9412);
\draw[draw=none,fill=color0,very thin] (axis cs:0.855986150892444,0) rectangle (axis cs:0.894894612296646,9682);
\draw[draw=none,fill=color0,very thin] (axis cs:0.894894612296645,0) rectangle (axis cs:0.933803073700847,9650);
\draw[draw=none,fill=color0,very thin] (axis cs:0.933803073700848,0) rectangle (axis cs:0.97271153510505,11648);
\draw[draw=none,fill=color0,very thin] (axis cs:0.972711535105049,0) rectangle (axis cs:1.01161999650925,15508);
\draw[draw=none,fill=color0,very thin] (axis cs:1.01161999650925,0) rectangle (axis cs:1.05052845791345,18624);
\draw[draw=none,fill=color0,very thin] (axis cs:1.05052845791345,0) rectangle (axis cs:1.08943691931766,18222);
\draw[draw=none,fill=color0,very thin] (axis cs:1.08943691931766,0) rectangle (axis cs:1.12834538072186,16724);
\draw[draw=none,fill=color0,very thin] (axis cs:1.12834538072186,0) rectangle (axis cs:1.16725384212606,15408);
\draw[draw=none,fill=color0,very thin] (axis cs:1.16725384212606,0) rectangle (axis cs:1.20616230353026,14178);
\draw[draw=none,fill=color0,very thin] (axis cs:1.20616230353026,0) rectangle (axis cs:1.24507076493446,13964);
\draw[draw=none,fill=color0,very thin] (axis cs:1.24507076493446,0) rectangle (axis cs:1.28397922633867,14558);
\draw[draw=none,fill=color0,very thin] (axis cs:1.28397922633867,0) rectangle (axis cs:1.32288768774287,14200);
\draw[draw=none,fill=color0,very thin] (axis cs:1.32288768774287,0) rectangle (axis cs:1.36179614914707,14204);
\draw[draw=none,fill=color0,very thin] (axis cs:1.36179614914707,0) rectangle (axis cs:1.40070461055127,15202);
\draw[draw=none,fill=color0,very thin] (axis cs:1.40070461055127,0) rectangle (axis cs:1.43961307195547,16412);
\draw[draw=none,fill=color0,very thin] (axis cs:1.43961307195547,0) rectangle (axis cs:1.47852153335968,17374);
\draw[draw=none,fill=color0,very thin] (axis cs:1.47852153335968,0) rectangle (axis cs:1.51742999476388,17122);
\draw[draw=none,fill=color0,very thin] (axis cs:1.51742999476388,0) rectangle (axis cs:1.55633845616808,16202);
\draw[draw=none,fill=color0,very thin] (axis cs:1.55633845616808,0) rectangle (axis cs:1.59524691757228,15766);
\draw[draw=none,fill=color0,very thin] (axis cs:1.59524691757228,0) rectangle (axis cs:1.63415537897648,15094);
\draw[draw=none,fill=color0,very thin] (axis cs:1.63415537897648,0) rectangle (axis cs:1.67306384038069,15634);
\draw[draw=none,fill=color0,very thin] (axis cs:1.67306384038069,0) rectangle (axis cs:1.71197230178489,14846);
\draw[draw=none,fill=color0,very thin] (axis cs:1.71197230178489,0) rectangle (axis cs:1.75088076318909,14562);
\draw[draw=none,fill=color0,very thin] (axis cs:1.75088076318909,0) rectangle (axis cs:1.78978922459329,15184);
\draw[draw=none,fill=color0,very thin] (axis cs:1.78978922459329,0) rectangle (axis cs:1.82869768599749,15888);
\draw[draw=none,fill=color0,very thin] (axis cs:1.82869768599749,0) rectangle (axis cs:1.86760614740169,15818);
\draw[draw=none,fill=color0,very thin] (axis cs:1.8676061474017,0) rectangle (axis cs:1.9065146088059,15780);
\draw[draw=none,fill=color0,very thin] (axis cs:1.9065146088059,0) rectangle (axis cs:1.9454230702101,15638);
\draw[draw=none,fill=color0,very thin] (axis cs:1.9454230702101,0) rectangle (axis cs:1.9843315316143,15560);
\draw[draw=none,fill=color0,very thin] (axis cs:1.9843315316143,0) rectangle (axis cs:2.0232399930185,13750);
\draw[draw=none,fill=color0,very thin] (axis cs:2.0232399930185,0) rectangle (axis cs:2.0621484544227,13038);
\draw[draw=none,fill=color0,very thin] (axis cs:2.0621484544227,0) rectangle (axis cs:2.10105691582691,13500);
\draw[draw=none,fill=color0,very thin] (axis cs:2.10105691582691,0) rectangle (axis cs:2.13996537723111,13664);
\draw[draw=none,fill=color0,very thin] (axis cs:2.13996537723111,0) rectangle (axis cs:2.17887383863531,14370);
\draw[draw=none,fill=color0,very thin] (axis cs:2.17887383863531,0) rectangle (axis cs:2.21778230003951,15054);
\draw[draw=none,fill=color0,very thin] (axis cs:2.21778230003951,0) rectangle (axis cs:2.25669076144372,16226);
\draw[draw=none,fill=color0,very thin] (axis cs:2.25669076144372,0) rectangle (axis cs:2.29559922284792,16110);
\draw[draw=none,fill=color0,very thin] (axis cs:2.29559922284792,0) rectangle (axis cs:2.33450768425212,15198);
\draw[draw=none,fill=color0,very thin] (axis cs:2.33450768425212,0) rectangle (axis cs:2.37341614565632,14568);
\draw[draw=none,fill=color0,very thin] (axis cs:2.37341614565632,0) rectangle (axis cs:2.41232460706052,12846);
\draw[draw=none,fill=color0,very thin] (axis cs:2.41232460706052,0) rectangle (axis cs:2.45123306846473,12028);
\draw[draw=none,fill=color0,very thin] (axis cs:2.45123306846472,0) rectangle (axis cs:2.49014152986893,10848);
\draw[draw=none,fill=color0,very thin] (axis cs:2.49014152986893,0) rectangle (axis cs:2.52904999127313,9934);
\draw[draw=none,fill=color0,very thin] (axis cs:2.52904999127313,0) rectangle (axis cs:2.56795845267733,9824);
\draw[draw=none,fill=color0,very thin] (axis cs:2.56795845267733,0) rectangle (axis cs:2.60686691408153,9828);
\draw[draw=none,fill=color0,very thin] (axis cs:2.60686691408153,0) rectangle (axis cs:2.64577537548573,11068);
\draw[draw=none,fill=color0,very thin] (axis cs:2.64577537548574,0) rectangle (axis cs:2.68468383688994,11786);
\draw[draw=none,fill=color0,very thin] (axis cs:2.68468383688994,0) rectangle (axis cs:2.72359229829414,11512);
\draw[draw=none,fill=color0,very thin] (axis cs:2.72359229829414,0) rectangle (axis cs:2.76250075969834,10240);
\draw[draw=none,fill=color0,very thin] (axis cs:2.76250075969834,0) rectangle (axis cs:2.80140922110254,9230);
\draw[draw=none,fill=color0,very thin] (axis cs:2.80140922110254,0) rectangle (axis cs:2.84031768250674,9214);
\draw[draw=none,fill=color0,very thin] (axis cs:2.84031768250674,0) rectangle (axis cs:2.87922614391095,9056);
\draw[draw=none,fill=color0,very thin] (axis cs:2.87922614391095,0) rectangle (axis cs:2.91813460531515,8380);
\draw[draw=none,fill=color0,very thin] (axis cs:2.91813460531515,0) rectangle (axis cs:2.95704306671935,7400);
\draw[draw=none,fill=color0,very thin] (axis cs:2.95704306671935,0) rectangle (axis cs:2.99595152812355,6930);
\draw[draw=none,fill=color0,very thin] (axis cs:2.99595152812355,0) rectangle (axis cs:3.03485998952775,6012);
\draw[draw=none,fill=color0,very thin] (axis cs:3.03485998952775,0) rectangle (axis cs:3.07376845093196,5856);
\draw[draw=none,fill=color0,very thin] (axis cs:3.07376845093196,0) rectangle (axis cs:3.11267691233616,5900);
\draw[draw=none,fill=color0,very thin] (axis cs:3.11267691233616,0) rectangle (axis cs:3.15158537374036,5988);
\draw[draw=none,fill=color0,very thin] (axis cs:3.15158537374036,0) rectangle (axis cs:3.19049383514456,6792);
\draw[draw=none,fill=color0,very thin] (axis cs:3.19049383514456,0) rectangle (axis cs:3.22940229654876,7688);
\draw[draw=none,fill=color0,very thin] (axis cs:3.22940229654876,0) rectangle (axis cs:3.26831075795297,8970);
\draw[draw=none,fill=color0,very thin] (axis cs:3.26831075795297,0) rectangle (axis cs:3.30721921935717,9894);
\draw[draw=none,fill=color0,very thin] (axis cs:3.30721921935717,0) rectangle (axis cs:3.34612768076137,8574);
\draw[draw=none,fill=color0,very thin] (axis cs:3.34612768076137,0) rectangle (axis cs:3.38503614216557,6140);
\draw[draw=none,fill=color0,very thin] (axis cs:3.38503614216557,0) rectangle (axis cs:3.42394460356977,4056);
\draw[draw=none,fill=color0,very thin] (axis cs:3.42394460356977,0) rectangle (axis cs:3.46285306497398,2606);
\draw[draw=none,fill=color0,very thin] (axis cs:3.46285306497398,0) rectangle (axis cs:3.50176152637818,2270);
\draw[draw=none,fill=color0,very thin] (axis cs:3.50176152637818,0) rectangle (axis cs:3.54066998778238,2056);
\draw[draw=none,fill=color0,very thin] (axis cs:3.54066998778238,0) rectangle (axis cs:3.57957844918658,2414);
\draw[draw=none,fill=color0,very thin] (axis cs:3.57957844918658,0) rectangle (axis cs:3.61848691059078,2904);
\draw[draw=none,fill=color0,very thin] (axis cs:3.61848691059078,0) rectangle (axis cs:3.65739537199499,3604);
\draw[draw=none,fill=color0,very thin] (axis cs:3.65739537199499,0) rectangle (axis cs:3.69630383339919,4206);
\draw[draw=none,fill=color0,very thin] (axis cs:3.69630383339919,0) rectangle (axis cs:3.73521229480339,4236);
\draw[draw=none,fill=color0,very thin] (axis cs:3.73521229480339,0) rectangle (axis cs:3.77412075620759,2946);
\draw[draw=none,fill=color0,very thin] (axis cs:3.77412075620759,0) rectangle (axis cs:3.81302921761179,1256);
\draw[draw=none,fill=color0,very thin] (axis cs:3.81302921761179,0) rectangle (axis cs:3.851937679016,356);
\draw[draw=none,fill=color0,very thin] (axis cs:3.851937679016,0) rectangle (axis cs:3.8908461404202,132);
\end{axis}

\end{tikzpicture}
    \caption{Histogram of Euclidean distances $d_\mathcal{X}$ between pairs of inputs on the two coiling spirals dataset.}
    \label{fig:histo}
\end{figure}

In Figure~\ref{fig:spiralerr}, we depict the evolution of the average error rate as a function of the parameter $\varepsilon$. Recall that the extremes for $\varepsilon = 0$ and $\varepsilon = 4$ correspond respectively to Vanilla and \emph{Mixup}.
One can note the significant benefit of \emph{Mixup} and \emph{Local Mixup} over Vanilla. As expected, \emph{Local Mixup} presents a minimum error rate which is significantly smaller than \emph{Mixup}'s error rate. We can note that the minimum is reached with a value of $\varepsilon$ smaller than the first quantile. This means that for this dataset \emph{Mixup} interpolations given above this threshold are either useless or misleading for the network's training.

It is worth pointing out that this toy dataset is particularly suitable to generate contradictory virtual samples. We delve into more complex and real world datasets in the following subsection.

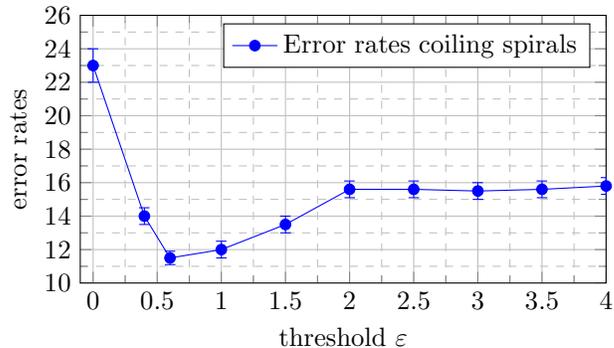
\begin{figure}
\begin{tikzpicture}
\pgfplotsset{%
    width=.45\textwidth,
    height=.3\textwidth
}
\begin{axis}[
    xlabel={threshold $\varepsilon$},
    ylabel={error rates},
    xmin=-0.1, xmax=4,
    ymin=10, ymax=26,
    xtick={0,0.5,1,1.5,2.0,2.5,3.0,3.5,4.0},
    ytick={10,12,14,16,18,20,22,24,26},
    legend pos=north east,
    xmajorgrids=true,
    ymajorgrids=true,
    xminorgrids=true,
    yminorgrids=true,
    minor tick num=1,
    width=.5\textwidth,
    minor grid style=dashed,
]

\addplot[
    color=blue,
    mark=*,
    blue,
    error bars/.cd, 
    y dir=both, 
    y explicit
    ]
    coordinates { (0,23)+=(0,1) -= (0,1)
    (0.4,14)+=(0.4,0.5) -= (0.4,0.5)
    (0.6,11.5)+=(0.6,0.4) -= (0.6,0.4)
    (1,12)+=(1,0.5) -= (1,0.5)
    (1.5,13.5)+=(1.5,0.5) -= (1.5,0.5)
    (2,15.6)+=(2,0.5) -= (2,0.5)
    (2.5,15.6)+=(2.5,0.5) -= (2,0.5)
    (3,15.5)+=(3,0.5) -= (2,0.5)
    (3.5,15.6)+=(3.5,0.5) -= (3.5,0.5)
    (4,15.8)+=(4,0.5) -= (4,0.5)
    };
    \legend{Error rates coiling spirals }
    
\end{axis}
\end{tikzpicture}
    \caption{Error rate as a function of $\varepsilon$ for the two coiling spirals dataset. Values are averaged over 1000 runs. Extremes correspond respectively to Vanilla ($\varepsilon= 0$) and \emph{Mixup} ($\varepsilon > 4$).}
    \label{fig:spiralerr}
\end{figure}

\subsection{High dimension}
\subsubsection{Lipschitz lower bound}
To illustrate the impact of $\varepsilon$ on the optimal Lipschitz constant, we use the dataset CIFAR-10~\citep{Cifar10} which is made of small images of size 32x32 pixels and 3 colors. There are 50,000 images in the training set corresponding to 10 classes.

We are interested in showcasing the evolution of $Q(D)$ when varying $\varepsilon$. The results are depicted in Figure~\ref{fig:evolutionofds}.

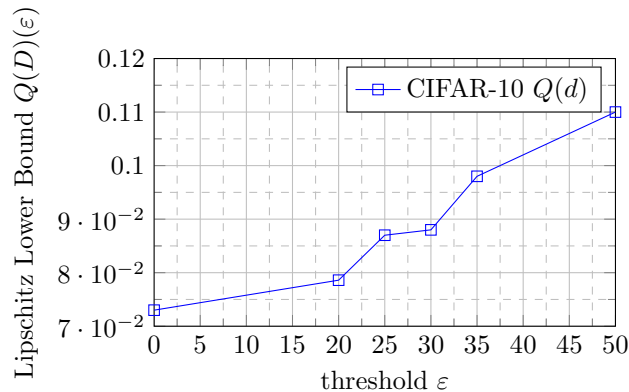
\begin{figure}[h!]
    \centering
\begin{tikzpicture}
\pgfplotsset{%
    width=.45\textwidth,
    height=.3\textwidth
}
\begin{axis}[
    xlabel={threshold $\varepsilon$},
    ylabel={Lipschitz Lower Bound $Q(D)(\varepsilon)$},
    xmin=0, xmax=50,
    ymin=0.07, ymax=0.12,
    xtick={0,5,10,15,20,25,30,35,40,45,50},
    ytick={0.07,0.08,0.09,0.1,0.11,0.12},
    legend pos=north east,
    xmajorgrids=true,
    ymajorgrids=true,
    xminorgrids=true,
    yminorgrids=true,
    minor tick num=1,
    minor grid style=dashed,
]

\addplot[
    color=blue,
    mark=square,
    ]
    coordinates
    {(0, 0.073)(20,0.0786)(25,0.087)(30,0.0880)(35,0.0980)(50,0.11)
    };
    \legend{CIFAR-10 $Q(d)$ }
    
\end{axis}
\end{tikzpicture}
    \caption{Evolution of $Q(D)$ on the dataset CIFAR10. Note that $\varepsilon = 0$ corresponds to Vanilla. $\varepsilon = 50$ corresponds to classical \emph{Mixup}.}
     \label{fig:evolutionofds}
\end{figure}
For classical \emph{Mixup} we obtained $Q(D) = 0.11$ and for Vanilla $Q(D) = 0.073$. Note that these two extremes are reached with \emph{Local Mixup} when $\varepsilon = 0$ and  $\varepsilon \geq 50$.

We observe that $\varepsilon$ can be used to smoothly tune the lower bound $Q(D)$. In practice, a lower $Q(D)$ is preferable, but this only accounts for the optimal Lipschitz constant. Larger values of $\varepsilon$ lead to larger training sets and thus potentially better generalization.

\subsubsection{Experiments on classification dataset}

We now test our proposition on different classification datasets and architectures.  We consider the datasets CIFAR10~\citep{Cifar10}, Fashion-MNIST~\citep{Fashion} and SVHN~\citep{svhn}. Fashion-MNIST is composed of clothes images of size 28x28 pixels (grayscale) . There are 60,000 images in the training set corresponding to 10 classes. SVHN is a real-world image dataset made of small cropped digits of size 32x32 pixels and 3 colors. There are 73257 digits in the training set corresponding to 10 classes. For these tests, we use a smooth decreasing exponential graph parametrized by $\alpha$.

For CIFAR10, we implement a ResNet18~\citep{resnet} as in~\citep{zhang2017mixup}, and average the error rates over 100 runs. We report the mean and confidence interval at 95\%. We observed that \emph{Local Mixup} with a value of $\alpha = 0.003$ showed a smaller error rate than the Vanilla network and \emph{Mixup}, with disjoint confidence intervals.
For Fashion MNIST, we implement a Densenet~\citep{densenet} and average the error rates over 10 runs. We also report the mean and confidence intervals at 95\%. Again, \emph{Local Mixup} with a value of $\alpha = 1e-3$ presents a smaller error rate than both the baseline and \emph{Mixup}. Note that for this dataset and this network architecture \emph{Mixup} impacts negatively the error rate, suggesting that on this dataset \emph{Mixup} creates spurious interpolations as discussed in~\cite{guo2019mixup}.
For SVHN we implement a LeNet-5\citep{lenet} architecture (3 convolution layers). Again, \emph{Local Mixup} performs better than both Vanilla and \emph{Mixup}.
\begin{table}[]
    \centering
        \caption{Error rates (\%) on CIFAR10, Fashion-MNIST and SVHN. Values are averaged on 100 runs for Cifar10 and 10 runs for Fashion-MNIST and SVHN. Mean errors with their confidence interval are given.}
    \begin{tabular}{c c c}
     \hline
    \textbf{MODEL} & \textbf{CIFAR-10} & \textbf{ERROR \%} \\
    \hline
&Baseline & $4.98 \pm  0.03$  \\
Resnet18 &Mixup & $4.13   \pm 0.03$   \\
&LM($\alpha = 3e-3$) & \textbf{ 4.03 $\pm$ 0.03}\\
\hline 
&\textbf{FASHION-MNIST}&  \\
\hline 
&Baseline & $6.20 \pm 0.2$  \\
DenseNet&Mixup & $6.36   \pm 0.16$   \\
&LM ($\alpha = 1e-3$) & \textbf{5.97 $\pm$ 0.2} \\
\hline
&\textbf{SVHN}&  \\ 
\hline
&Baseline & $10.01 \pm 0.15$  \\
LeNet &Mixup & $8.31  \pm 0.14$   \\
&LM ($\alpha = 5e-2$) & \textbf{8.20 $\pm$ 0.13} \\
    \end{tabular}
    \label{tab:ErrCifar10}
\vspace{-0.1cm}
\end{table}

For these experiments, we also tried to use a $K$-nearest neighbor graph or a thresholded graph but without being able to achieve smaller error rates compared to \emph{Mixup} or even Vanilla. This may indicate that some segments generated by \emph{Mixup} are important to act as a regularizer during training even if some of them may generate manifold intrusions. By tuning $\alpha$, we weigh the importance of this regularization.

\subsubsection{Discussion}
Experiments in both low and high dimensions demonstrated the capacity of \emph{Local Mixup} to outperform \emph{Mixup} thanks to the use of locality. 
Still, the choice of the added hyper-parameter ($\alpha$, $\varepsilon$ or $K$) is essential and data dependent. For now, we reported results selecting the parameter leading to the best test error rate among a small number of possibilities. In future work we would like to rely on quantitative information given on the topology such as the histogram of the distance or persistence diagrams\citep{topologicaldatanalysis} to tune these hyper-parameters.

Note also that to embed the notion of locality we decided to use the Euclidean metric, although in general datasets lie in nonlinear manifolds. On CIFAR10 for example, in~\citep{KnearestMixup} the authors show that it is possible to achieve classification scores significantly better than the chance level using the Euclidean metric, but very far from state-of-the-art. There would be many possibilities to improve over using the Euclidean metric, including using pullback metrics~\citep{pullback,pullbacklatent} given by the euclidean distance between the samples once in the feature space corresponding to the penultimate layer.

\section{Conclusion}

In this paper, we introduced a methodology called \emph{Local Mixup}, in which pairs of samples are interpolated and weighted in the loss depending on the distance between them in the input domain.
This methodology comes with a hyper-parameter that allows to provide a continuous range of solutions between Vanilla and classical \emph{Mixup}.
Using a simple framework, we showed that \emph{Local Mixup} can control the bias/variance trade-off of trained models. In more general settings, we showed that \emph{Local Mixup} can tune a lower bound on the Lipschitz constant of the trained model.
We used real world datasets to prove the ability of \emph{Local Mixup} to achieve better generalization, as measured using the test error rate, than Vanilla and classical \emph{Mixup}.

Overall, our methodology introduces a simple way to incorporate locality notions into \emph{Mixup}. We believe that such a notion of locality is beneficial and could be leveraged to a greater level in future work, or could be incorporated to the various \emph{Mixup} extensions that have been proposed in the community.
In future work, we would like to investigate further the choice of the graph, the choice of the hyper-parameter that comes with it, and trainable versions of \emph{Local Mixup}. Extending the theoretical results to more general contexts would definitely allow to gain further intuition on the effect of locality on \emph{Mixup}.

\bibliography{biblio}
\end{document}